\documentclass[12pt]{article}
\usepackage[centertags]{amsmath}
\usepackage{amsfonts}
\usepackage{amssymb}
\usepackage{latexsym}
\usepackage{amsthm}
\usepackage{newlfont}
\usepackage{graphicx}
\usepackage{listings}
\usepackage{booktabs}
\usepackage{abstract}
\lstset{numbers=none,language=MATLAB}

\bibliographystyle{amsplain}

\newlength{\defbaselineskip}
\setlength{\defbaselineskip}{\baselineskip}
\newcommand{\setlinespacing}[1]%
           {\setlength{\baselineskip}{#1 \defbaselineskip}}

\newcommand{\mspan}{\operatorname{span}}

\newcommand{\actaqed}{\hfill $\actabox$}
{\medskip\noindent \textit{Proof of #1. }}%
{\actaqed \medskip}

\def\D{{\mathcal D}}

\def\R{{\mathbb R}}
\def \<{\langle}
\def\>{\rangle}

\def \e{\epsilon}

\def \ff{\varphi}

\def \sp{\operatorname{span}}

\def\bt{\beta}
\def\la{\lambda}
\def\a{\alpha}

\newtheorem{Theorem}{Theorem}[section]
\newtheorem{Lemma}{Lemma}[section]
\newtheorem{Definition}{Definition}[section]
\newtheorem{Proposition}{Proposition}[section]
\newtheorem{Remark}{Remark}[section]

\newtheorem{Corollary}{Corollary}[section]
\numberwithin{equation}{section}

\begin{document}
\title{{Greedy approximation in convex optimization} }
\author{V.N. Temlyakov \thanks{ University of South Carolina. Research was supported by NSF grant DMS-0906260 }} \maketitle
\begin{abstract}
{We study sparse approximate solutions to convex optimization problems. It is known that in many engineering applications researchers are interested in 
an approximate solution of an optimization problem as a linear combination of elements from a given system of elements. There is an increasing interest in building such sparse approximate solutions using different greedy-type algorithms. The problem of approximation of a given element of a Banach space by linear combinations of elements from a given system (dictionary) is well studied 
in nonlinear approximation theory.  At a first glance the settings of 
approximation and optimization problems are very different. In the approximation problem an element is given and our task is to find a sparse approximation of it. In optimization theory an energy function is given and we should find an approximate sparse solution to the minimization problem. It turns out that the same technique can be used for solving both problems.
We show how the technique developed in nonlinear approximation theory,
in particular, the greedy approximation technique can be adjusted for finding a sparse solution of an optimization problem.    }
\end{abstract}

\section{Introduction}

We study sparse approximate solutions to convex optimization problems. We apply the technique developed in nonlinear approximation known under the name of {\it greedy approximation}. A typical 
problem of convex optimization is to find an approximate solution to the problem
\begin{equation}\label{1.0}
\inf_x E(x)
\end{equation}
under assumption that $E$ is a convex function. Usually, in convex optimization function $E$ is defined on a finite dimensional space $\R^n$ (see \cite{BL}, \cite{N}).
Recent needs of numerical analysis call for consideration of the above optimization problem on an infinite dimensional space, for instance, a space of 
continuous functions. One more important argument that motivates us to study 
this problem in the infinite dimensional space setting is the following. In many contemporary numerical applications the dimension $n$ of the ambient space $\R^n$ is large and we would like to obtain bounds on the convergence rate independent of the dimension $n$. Our results for infinite dimensional spaces 
provide such bounds on the convergence rate. 
Thus, we consider a convex function $E$ defined on a Banach space $X$. It is pointed out in \cite{Z} that in many engineering applications researchers are interested in 
an approximate solution of problem (\ref{1.0}) as a linear combination of elements from a given system $\D$ of elements. There is an increasing interest in building such sparse approximate solutions using different greedy-type algorithms (see, for instance, \cite{Z}, \cite{SSZ}, \cite{CRPW}, and \cite{TRD}). The problem of approximation of a given element $f\in X$ by linear combinations of elements from $\D$ is well studied 
in nonlinear approximation theory (see, for instance \cite{D}, \cite{T2}, \cite{Tbook}). In order to address the contemporary needs of approximation theory and computational mathematics, a very general model of approximation with regard to a redundant system (dictionary) has been considered in many recent papers.   As such a model, we choose a Banach space $X$ with elements as target functions and an arbitrary system $\D$ of elements of this space such that the closure of $\mspan\D$ coincides with $X$ as an approximating system. 
 
The fundamental question  is how to construct good methods (algorithms) of approximation. Recent results have established   that greedy type algorithms are suitable methods of nonlinear approximation in both sparse approximation with regard to bases and sparse approximation with regard to redundant systems. 
It turns out that there is one fundamental principal that allows us to build good algorithms both for arbitrary redundant systems and for very simple well structured bases like the Haar basis. This principal is the use of a greedy step in searching for a new element to be added to a given sparse approximant. By a {\it greedy step}, we mean one 
which maximizes a certain functional determined by information from the previous steps of the algorithm. We obtain different types of greedy algorithms by varying the above mentioned functional and also by using different ways of 
constructing (choosing coefficients of the linear combination) the $m$-term approximant from the already found $m$ elements of the dictionary.

  We point out that at a first glance the settings of 
approximation and optimization problems are very different. In the approximation problem an element $f\in X$ is given and our task is to find a sparse approximation of it. In optimization theory an energy function $E(x)$ is given and we should find an approximate sparse solution to the minimization problem. It turns out that the same technique can be used for solving both problems.

We show how the technique developed in nonlinear approximation theory,
in particular, the greedy approximation technique can be adjusted for finding a sparse with respect to $\D$ solution of problem (\ref{1.0}).  

We begin with a brief description of greedy approximation methods in Banach spaces. The reader can find a detailed discussion of greedy approximation in the book \cite{Tbook}. 
Let $X$ be a Banach space with norm $\|\cdot\|$. We say that a set of elements (functions) $\D$ from $X$ is a dictionary, respectively, symmetric dictionary, if each $g\in \D$ has norm bounded by one ($\|g\|\le1$),
$$
g\in \D \quad \text{implies} \quad -g \in \D,
$$
and the closure of $\sp \D$ is $X$. In this paper symmetric dictionaries are considered. We denote the closure (in $X$) of the convex hull of $\D$ by $A_1(\D)$.  For a nonzero element $f\in X$ we let $F_f$ denote a norming (peak) functional for $f$: 
$$
\|F_f\| =1,\qquad F_f(f) =\|f\|.
$$
The existence of such a functional is guaranteed by Hahn-Banach theorem. 
We describe a typical greedy algorithm from a family of {\it dual greedy algorithms}. 
Let 
$\tau := \{t_k\}_{k=1}^\infty$ be a given weakness sequence of  nonnegative numbers $t_k \le 1$, $k=1,\dots$. We define first the Weak Chebyshev Greedy Algorithm (WCGA) (see \cite{T15}) that is a generalization for Banach spaces of the Weak Orthogonal Greedy Algorithm.   

 {\bf Weak Chebyshev Greedy Algorithm (WCGA).} 
We define $f^c_0 := f^{c,\tau}_0 :=f$. Then for each $m\ge 1$ we have the following inductive definition.

(1) $\varphi^{c}_m :=\varphi^{c,\tau}_m \in \D$ is any element satisfying
$$
F_{f^{c}_{m-1}}(\varphi^{c}_m) \ge t_m  \sup_{g\in\D}F_{f^{c}_{m-1}}(g).
$$

(2) Define
$$
\Phi_m := \Phi^\tau_m := \sp \{\varphi^{c}_j\}_{j=1}^m,
$$
and define $G_m^c := G_m^{c,\tau}$ to be the best approximant to $f$ from $\Phi_m$.

(3) Let
$$
f^{c}_m := f^{c,\tau}_m := f-G^c_m.
$$

Let us make a remark that justifies the idea of the dual greedy algorithms in terms of real analysis.  We consider here approximation in uniformly smooth Banach spaces. For a Banach space $X$ we define the modulus of smoothness
$$
\rho(u) := \sup_{\|x\|=\|y\|=1}(\frac{1}{2}(\|x+uy\|+\|x-uy\|)-1).
$$
The uniformly smooth Banach space is the one with the property
$$
\lim_{u\to 0}\rho(u)/u =0.
$$
 
We note that from the definition of modulus of smoothness we get the following inequality.
\begin{equation}\label{1.1'}
0\le \|x+uy\|-\|x\|-uF_x(y)\le 2\|x\|\rho(u\|y\|/\|x\|).  
\end{equation}
This inequality implies the proposition.
\begin{Proposition}\label{P1.1} Let $X$ be a uniformly smooth Banach space. Then, for any $x\neq0$ and $y$ we have
\begin{equation}\label{1.9}
F_x(y)=\left(\frac{d}{du}\|x+uy\|\right)(0)=\lim_{u\to0}(\|x+uy\|-\|x\|)/u. 
\end{equation}
\end{Proposition}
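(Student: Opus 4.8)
The plan is to obtain the two-sided limit in (\ref{1.9}) by establishing the two one-sided limits separately, each of which drops straight out of the bracketing inequality (\ref{1.1'}) once we divide by $u$ and apply a squeeze argument. The case $y=0$ is trivial (both sides vanish), so I assume $y\neq 0$ throughout; then $\|x\|,\|y\|$ are fixed positive constants, which is all we need to invoke uniform smoothness.

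First I would treat the right derivative. For $u>0$, dividing (\ref{1.1'}) through by $u$ yields
\begin{equation*}
0\le \frac{\|x+uy\|-\|x\|}{u}-F_x(y)\le \frac{2\|x\|}{u}\,\rho\!\left(\frac{u\|y\|}{\|x\|}\right).
\end{equation*}
To show the right-hand side vanishes, I would substitute $v=u\|y\|/\|x\|$, turning it into $2\|y\|\,\rho(v)/v$. Since $X$ is uniformly smooth, $\rho(v)/v\to 0$ as $v\to 0$, and $v\to 0^+$ exactly when $u\to 0^+$; the squeeze theorem then gives $\lim_{u\to 0^+}(\|x+uy\|-\|x\|)/u=F_x(y)$.

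For the left derivative I would use the linearity of $F_x$. Writing $u=-s$ with $s>0$ and applying (\ref{1.1'}) with $y$ replaced by $-y$, and using $\|-y\|=\|y\|$ together with $F_x(-y)=-F_x(y)$, I obtain
\begin{equation*}
0\le \frac{\|x-sy\|-\|x\|}{s}+F_x(y)\le \frac{2\|x\|}{s}\,\rho\!\left(\frac{s\|y\|}{\|x\|}\right).
\end{equation*}
Since $(\|x-sy\|-\|x\|)/s=-(\|x+uy\|-\|x\|)/u$, the same substitution and squeeze give $\lim_{u\to 0^-}(\|x+uy\|-\|x\|)/u=F_x(y)$. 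The two one-sided limits coincide, so the two-sided limit exists and equals $F_x(y)$, which is the identity (\ref{1.9}) and simultaneously exhibits the differentiability of $u\mapsto\|x+uy\|$ at $0$ with the stated derivative.

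As for obstacles, there is no deep difficulty here: inequality (\ref{1.1'}) already packages the entire analytic content of uniform smoothness, so the argument is essentially a change of variables plus a squeeze. The only point demanding genuine care is the bookkeeping for the left-hand limit, where one must correctly track the sign introduced by $u<0$ and invoke the linearity $F_x(-y)=-F_x(y)$ rather than any (unavailable) continuity of $F_x$ in its subscript. A secondary check is to confirm that $v=u\|y\|/\|x\|$ really reduces the error term to the defining limit $\rho(v)/v\to 0$, which it does precisely because $\|x\|$ and $\|y\|$ are fixed and positive.
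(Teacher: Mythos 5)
Your proof is correct and follows exactly the route the paper intends: the paper simply asserts that inequality (\ref{1.1'}) implies the proposition, and your argument fills in that implication by dividing (\ref{1.1'}) by $u$, applying the substitution $v=u\|y\|/\|x\|$ together with $\rho(v)/v\to0$, and handling the left-sided limit via $F_x(-y)=-F_x(y)$. The sign bookkeeping for $u<0$ and the treatment of the trivial case $y=0$ are both handled correctly, so there is nothing to add.
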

Proposition \ref{P1.1} shows that in the WCGA we are looking for an element $\ff_m\in\D$ that provides a big derivative of the quantity $\|f_{m-1}+ug\|$. 
Here is one more important greedy algorithm. 

  {\bf Weak Greedy Algorithm with Free Relaxation  (WGAFR).} 
Let $\tau:=\{t_m\}_{m=1}^\infty$, $t_m\in[0,1]$, be a weakness  sequence. We define $f_0   :=f$ and $G_0  := 0$. Then for each $m\ge 1$ we have the following inductive definition.

(1) $\varphi_m   \in \D$ is any element satisfying
$$
F_{f_{m-1}}(\varphi_m  ) \ge t_m   \sup_{g\in\D}F_{f_{m-1}}(g).
$$

(2) Find $w_m$ and $ \lambda_m$ such that
$$
\|f-((1-w_m)G_{m-1} + \la_m\varphi_m)\| = \inf_{ \la,w}\|f-((1-w)G_{m-1} + \la\varphi_m)\|
$$
and define
$$
G_m:=   (1-w_m)G_{m-1} + \la_m\varphi_m.
$$

(3) Let
$$
f_m   := f-G_m.
$$
It is known that both algorithms WCGA and WGAFR converge in any uniformly smooth Banach space under mild conditions on the weakness sequence $\{t_k\}$, for instance, $t_k=t$, $k=1,2,\dots$, $t>0$, guarantees such convergence. The following theorem provides rate of convergence (see \cite{Tbook}, pp. 347, 353). 
\begin{Theorem}\label{T1.1} Let $X$ be a uniformly smooth Banach space with modulus of smoothness $\rho(u)\le \gamma u^q$, $1<q\le 2$. Take a number $\e\ge 0$ and two elements $f$, $f^\e$ from $X$ such that
$$
\|f-f^\e\| \le \e,\quad
f^\e/A(\e) \in A_1(\D),
$$
with some number $A(\e)>0$.
Then, for both algorithms WCGA and WGAFR  we have ($p:=q/(q-1)$)
$$
\|f^{c,\tau}_m\| \le  \max\left(2\e, C(q,\gamma)(A(\e)+\e)(1+\sum_{k=1}^mt_k^p)^{-1/p}\right) . 
$$
\end{Theorem}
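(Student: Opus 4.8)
The plan is to reduce the analysis of both algorithms to a single scalar recursion for the error norms $a_m := \|f_m\|$ (where $f_m$ denotes $f^{c,\tau}_m$ for the WCGA and $f_m$ for the WGAFR) and then to solve that recursion. First I would dispose of the $2\e$ alternative in the maximum: both sequences $\{a_m\}$ are non-increasing (for the WCGA because $\Phi_{m-1}\subset\Phi_m$, for the WGAFR because $w=\la=0$ is admissible in step (2)), so if $a_m>2\e$ then $a_k>2\e$ for all $k\le m$. Hence it suffices to prove the stated bound under the standing assumption that $a_{m-1}>2\e$ at every step, which in particular forces $\e<A(\e)$, since $a_{m-1}\le a_0=\|f\|\le\|f^\e\|+\e\le A(\e)+\e$ (the last inequality because $f^\e/A(\e)\in A_1(\D)$ has norm at most one).

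The core is a one-step inequality. For the WCGA, since $G^c_{m-1}+\la\ff^{c}_m\in\Phi_m$ for every $\la$, the minimality of $G^c_m$ gives $a_m\le\|f^{c}_{m-1}-\la\ff^{c}_m\|$, and (\ref{1.1'}) together with $\|\ff^{c}_m\|\le1$ yields
\[
a_m\le a_{m-1}-\la F_{f^{c}_{m-1}}(\ff^{c}_m)+2a_{m-1}\rho(\la/a_{m-1}).
\]
To lower bound the middle term I would use the greedy selection together with the hypothesis $f^\e/A(\e)\in A_1(\D)$: since the supremum of a continuous linear functional over the closed convex hull $A_1(\D)$ equals its supremum over $\D$, we get $\sup_{g}F_{f^{c}_{m-1}}(g)\ge F_{f^{c}_{m-1}}(f^\e)/A(\e)$, while $F_{f^{c}_{m-1}}(f^\e)\ge F_{f^{c}_{m-1}}(f)-\e=\|f^{c}_{m-1}\|-\e$, the last equality being the Chebyshev orthogonality $F_{f^{c}_{m-1}}(G^{c}_{m-1})=0$ (the norming functional of the residual annihilates $\Phi_{m-1}$, by Proposition \ref{P1.1} applied at the best approximant). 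Thus $F_{f^{c}_{m-1}}(\ff^{c}_m)\ge t_m(a_{m-1}-\e)/A(\e)$. For the WGAFR the only new point is that $F_{f_{m-1}}(G_{m-1})$ need not vanish; here I would exploit the free relaxation, taking the trial point $(1-w)G_{m-1}+\la\ff_m$ with $w=\la t_m/A(\e)$, which cancels the unwanted $F_{f_{m-1}}(G_{m-1})$ contribution and reproduces exactly the same estimate, the norm $\|wG_{m-1}-\la\ff_m\|$ being bounded by $5\la$ thanks to $\|G_{m-1}\|\le 2(A(\e)+\e)<4A(\e)$ in the active regime $\e<A(\e)$.

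Combining, and using $a_{m-1}-\e>a_{m-1}/2$ together with $\rho(u)\le\gamma u^q$, both algorithms satisfy
\[
a_m\le a_{m-1}-\frac{\la t_m a_{m-1}}{2A(\e)}+2\gamma C_0^{q}\la^{q}a_{m-1}^{1-q},
\]
with $C_0=1$ for the WCGA and an absolute constant for the WGAFR. I would then minimize the right-hand side over $\la\ge0$; the optimal $\la$ is a multiple of $a_{m-1}^{p}$, and a short computation (both terms turn out to carry the exponent $p+1$ in $a_{m-1}$) collapses the bound to
\[
a_m\le a_{m-1}\bigl(1-\beta\, t_m^{\,p}\,a_{m-1}^{\,p}\bigr),\qquad p=\tfrac{q}{q-1},
\]
where $\beta=\beta(q,\gamma,A(\e))>0$ satisfies $p\,\beta\,(A(\e)+\e)^{p}\ge c_1(q,\gamma)>0$.

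Finally I would solve this recursion by passing to reciprocal $p$-th powers. Using $(1-x)^{-p}\ge 1+px$ for $0\le x<1$ gives $a_m^{-p}\ge a_{m-1}^{-p}+p\beta t_m^{\,p}$, and telescoping from $a_0\le A(\e)+\e$ produces $a_m^{-p}\ge (A(\e)+\e)^{-p}\bigl(1+c_1\sum_{k=1}^m t_k^{\,p}\bigr)$; inverting and using $1+c_1 S\ge\min(1,c_1)(1+S)$ yields the claimed estimate with $C(q,\gamma)=(\min(1,c_1))^{-1/p}$. The main obstacle is the one-step inequality, and specifically making the WGAFR fit the same mold: the relaxation parameter must be chosen so as to annihilate the non-orthogonal term $F_{f_{m-1}}(G_{m-1})$ while keeping $\|wG_{m-1}-\la\ff_m\|$ comparable to $\la$, and the bookkeeping of the constant there (resolved by the observation that $\e<A(\e)$ whenever the recursion is active) is the most delicate part; the scalar recursion and its solution are then routine.
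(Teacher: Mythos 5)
Your proposal is correct and follows essentially the same route as the paper's method: the paper only cites Theorem \ref{T1.1} from \cite{Tbook}, but its own proofs of the analogous Theorems \ref{T2.3o} and \ref{T4.2} (via Lemmas \ref{L2.3}, \ref{L2.2}, \ref{L2.1}, \ref{L4.1} and the recursion Lemma \ref{L2.4}) use exactly your scheme --- smoothness inequality (\ref{1.1'}) plus greedy selection plus duality over $A_1(\D)$, with Chebyshev orthogonality for the WCGA and the choice $w^*=\la t_m/A(\e)$ for the free-relaxation step --- and Section 5 explicitly describes this correspondence. Your only (harmless) deviation is solving the scalar recursion by telescoping $a_m^{-p}$ directly via $(1-x)^{-p}\ge 1+px$ instead of invoking Lemma \ref{L2.4}.
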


The above Theorem \ref{T1.1} simultaneously takes care of two issues: noisy data and approximation in an interpolation space. In order to apply it for noisy data we interpret $f$ as a noisy version of a signal and $f^\e$ as a noisless version of a signal. Then, assumption $f^\e/A(\e)\in A_1(\D)$ describes our smoothness assumption on the noisless signal. Theorem \ref{T1.1} can be applied for approximation of $f$ under assumption that $f$ belongs  to one  
of interpolation spaces between $X$ and the space generated by the $A_1(\D)$-norm (atomic norm). We now make a remark showing that the $A_1(\D)$-norm (in other words, the assumption $f/A\in A_1(\D)$) appears naturally in convex optimization problems. 

It is pointed out in \cite{FNW} that there has been considerable interest in solving the convex unconstrained optimization problem 
\begin{equation}\label{op1}
\min_{x}\frac{1}{2}\|y-\Phi x\|_2^2 +\la\|x\|_1
\end{equation}
where $x\in \R^n$, $y\in \R^k$, $\Phi$ is an $k\times n$ matrix, $\la$ is a nonnegative parameter, $\|v\|_2$ denotes the Euclidian norm of $v$, and 
$\|v\|_1$ is the $\ell_1$ norm of $v$. Problems of the form (\ref{op1}) have become familiar over the past three decades, particularly in statistical and signal processing contexts. Problem (\ref{op1}) is closely related to the following convex constrained optimization problem
\begin{equation}\label{op4}
\min_{x}\frac{1}{2}\|y-\Phi x\|_2^2 \quad\text{subject to}\quad \|x\|_1\le A.
\end{equation}
The above convex optimization problem can be recast as an approximation problem of $y$ with respect to a dictionary $\D:=\{\pm\varphi_i\}_{i=1}^n$ which is associated with a $k\times n$ matrix
$\Phi=[\varphi_1\dots\varphi_n]$ with $\varphi_j\in \R^k$ being the column vectors of $\Phi$. The condition $y\in A_1(\D)$ is
equivalent to existence of $x\in \R^m$ such that $y=\Phi x$ and
\begin{equation}\label{op5}
\|x\|_{1}:=|x_1|+\dots+|x_m| \le 1.
\end{equation}
As a direct corollary of Theorem \ref{T1.1}, we get for any
$y\in A_1(\D)$ that the WCGA and the WGAFR with $\tau=\{t\}$ guarantee the
following upper bound for the error
\begin{equation}\label{op6}
\|y_k\|_2\le Ck^{-1/2}.
\end{equation}
The bound (\ref{op6}) holds for any $\D$ (any $\Phi$).

We note that in the study of greedy-type algorithms in approximation theory (see \cite{Tbook}) emphasis are put on the theory of approximation with respect to arbitrary dictionary $\D$. The reader can find examples of specific dictionaries of interest in \cite{Tbook} and \cite{TRD}. We present some results 
on sparse solutions for convex optimization problems in the setting with an arbitrary dictionary $\D$.

We generalize the algorithms WCGA and WGAFR to the case of convex optimization and prove an analog of Theorem \ref{T1.1} for the new algorithms. Let us illustrate this on the generalization of the WGAFR.

We 
assume that the set
$$
D:=\{x:E(x)\le E(0)\}
$$
is bounded.
For a bounded set $D$ define the modulus of smoothness of $E$ on $D$ as follows
\begin{equation}\label{1.1}
\rho(E,u):=\frac{1}{2}\sup_{x\in D, \|y\|=1}|E(x+uy)+E(x-uy)-2E(x)|.
\end{equation}

We assume that $E$ is Fr{\'e}chet differentiable. Then convexity of $E$ implies that for any $x,y$ 
\begin{equation}\label{1.2}
E(y)\ge E(x)+\<E'(x),y-x\>
\end{equation}
or, in other words,
\begin{equation}\label{1.3}
E(x)-E(y) \le \<E'(x),x-y\> = \<-E'(x),y-x\>.
\end{equation} 
We will often use the following simple lemma.
\begin{Lemma}\label{L1.1} Let $E$ be Fr{\'e}chet differentiable convex function. Then the following inequality holds for $x\in D$
\begin{equation}\label{1.6}
0\le E(x+uy)-E(x)-u\<E'(x),y\>\le 2\rho(E,u\|y\|).  
\end{equation}
\end{Lemma}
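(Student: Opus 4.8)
\emph{Proof sketch.} The plan is to derive both inequalities directly from the convexity estimate (\ref{1.2}) together with the definition (\ref{1.1}) of the modulus of smoothness $\rho(E,\cdot)$. Throughout I may assume $y\neq 0$, since for $y=0$ all three expressions vanish identically.

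For the left-hand inequality I would apply the convexity bound (\ref{1.2}) with the running variable set to $x+uy$ and the base point kept at $x$. This gives $E(x+uy)\ge E(x)+\<E'(x),(x+uy)-x\> = E(x)+u\<E'(x),y\>$, which is exactly $E(x+uy)-E(x)-u\<E'(x),y\>\ge 0$. No smoothness is needed here; plain convexity suffices.

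For the right-hand inequality the idea is to symmetrize. Applying (\ref{1.2}) once more, now in the direction $-uy$, yields $E(x-uy)\ge E(x)-u\<E'(x),y\>$, hence $-u\<E'(x),y\>\le E(x-uy)-E(x)$. Adding $E(x+uy)-E(x)$ to both sides bounds the quantity of interest by the symmetric second difference: $E(x+uy)-E(x)-u\<E'(x),y\>\le E(x+uy)+E(x-uy)-2E(x)$.

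It then remains to estimate this second difference by the modulus of smoothness. Writing $y=\|y\|z$ with $\|z\|=1$ and setting $v:=u\|y\|$, I have $x\pm uy = x\pm vz$, so the definition (\ref{1.1}) applied at the point $x\in D$ and the unit vector $z$ gives $E(x+vz)+E(x-vz)-2E(x)\le 2\rho(E,v)=2\rho(E,u\|y\|)$. Combining the last two estimates completes the argument. The only point requiring care is this final rescaling: the modulus (\ref{1.1}) is normalized over unit directions, so one must pass from the arbitrary $y$ to $z=y/\|y\|$ and correctly track the factor $\|y\|$ inside the argument of $\rho$. Beyond that the statement is essentially a two-line consequence of convexity, so I expect no genuine obstacle.
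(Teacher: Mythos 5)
Your argument is correct and is essentially identical to the paper's proof: the left inequality from the convexity estimate (\ref{1.2}) applied at $x+uy$, and the right inequality by applying (\ref{1.2}) in the direction $-uy$ and then invoking the definition (\ref{1.1}) of $\rho(E,\cdot)$ to bound the symmetric second difference by $2\rho(E,u\|y\|)$. The rescaling $y=\|y\|z$ that you flag is handled implicitly in the paper but is the same step, so there is no substantive difference.
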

\begin{proof} The left inequality follows directly from (\ref{1.2}).
  Next, from the definition of modulus of smoothness it follows that
\begin{equation}\label{1.7}
E(x+uy)+E(x-uy)\le 2(E(x)+\rho(E,u\|y\|)).  
\end{equation}
Inequality (\ref{1.2}) gives
\begin{equation}\label{1.8}
E(x-uy)\ge E(x) + \<E'(x),-uy\>=E(x)-u\<E'(x),y\>. 
\end{equation}
Combining (\ref{1.7}) and (\ref{1.8}), we obtain
$$
E(x+uy)\le E(x)+u\<E'(x),y\>+2\rho(E,u\|y\|).
$$
This proves the second inequality. 
\end{proof}

  {\bf Weak Greedy Algorithm with Free Relaxation  (WGAFR(co)).} 
Let $\tau:=\{t_m\}_{m=1}^\infty$, $t_m\in[0,1]$, be a weakness  sequence. We define   $G_0  := 0$. Then for each $m\ge 1$ we have the following inductive definition.

(1) $\varphi_m   \in \D$ is any element satisfying
$$
\<-E'(G_{m-1}),\varphi_m\> \ge t_m  \sup_{g\in\D}\<-E'(G_{m-1}),g\>.
$$

(2) Find $w_m$ and $ \lambda_m$ such that
$$
E((1-w_m)G_{m-1} + \la_m\varphi_m) = \inf_{ \la,w}E((1-w)G_{m-1} + \la\varphi_m)
$$
and define
$$
G_m:=   (1-w_m)G_{m-1} + \la_m\varphi_m.
$$
 
In Section 4 we prove the following rate of convergence result.

\begin{Theorem}\label{T1.2} Let $E$ be a uniformly smooth convex function with modulus of smoothness $\rho(E,u)\le \gamma u^q$, $1<q\le 2$. Take a number $\e\ge 0$ and an element  $f^\e$ from $D$ such that
$$
E(f^\e) \le \inf_{x\in D}E(x)+ \e,\quad
f^\e/A(\e) \in A_1(\D),
$$
with some number $A(\e)\ge 1$.
Then we have for WGAFR(co) ($p:=q/(q-1)$)
$$
E(G_m)-\inf_{x\in D}E(x) \le  \max\left(2\e, C(q,\gamma)A(\e)\left(C(E,q,\gamma)+\sum_{k=1}^mt_k^p\right)^{1-q}\right) . 
$$
\end{Theorem}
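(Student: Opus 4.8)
The plan is to reduce everything to a scalar recursion for the sequence $a_m := E(G_m) - \inf_{x\in D}E(x)$. From one step of WGAFR(co) I want to extract an inequality of the shape $a_m \le a_{m-1} - c\,t_m^p A(\epsilon)^{-p}a_{m-1}^p$ valid whenever $a_{m-1} > 2\epsilon$, and then solve this recursion to produce the decaying branch, the trivial branch $a_m \le 2\epsilon$ being handled by monotonicity. First I would record two facts. Since step (2) minimizes $E$ along the chosen line, $E(G_m)\le E(G_{m-1})\le\cdots\le E(G_0)=E(0)$, so every $G_m$ lies in $D$ and $\|G_m\|\le R:=\sup_{x\in D}\|x\|<\infty$. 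Also, as $\D$ is symmetric, $A_1(\D)=\overline{\operatorname{conv}}(\D)$, so a linear functional has the same supremum over $\D$ and over $A_1(\D)$; in particular $\sup_{g\in\D}\langle -E'(G_{m-1}),g\rangle \ge \langle -E'(G_{m-1}),f^\epsilon/A(\epsilon)\rangle$.

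For the one-step estimate I would use the free relaxation: for all real $w,\lambda$ we have $E(G_m)\le E((1-w)G_{m-1}+\lambda\varphi_m)$. Writing the argument as $G_{m-1}+(-wG_{m-1}+\lambda\varphi_m)$ and applying Lemma~\ref{L1.1} with $x=G_{m-1}$, together with $\rho(E,u)\le\gamma u^q$ and $\|-wG_{m-1}+\lambda\varphi_m\|\le |w|R+\lambda$, yields
\[
E(G_m)\le E(G_{m-1})+w\langle -E'(G_{m-1}),G_{m-1}\rangle-\lambda\langle -E'(G_{m-1}),\varphi_m\rangle+2\gamma(|w|R+\lambda)^q .
\]
Next I would insert the greedy inequality from step (1) and the convexity bound (\ref{1.3}): for $\lambda\ge 0$,
\[
\lambda\langle -E'(G_{m-1}),\varphi_m\rangle\ge\frac{\lambda t_m}{A(\epsilon)}\langle -E'(G_{m-1}),f^\epsilon\rangle\ge\frac{\lambda t_m}{A(\epsilon)}\bigl(E(G_{m-1})-E(f^\epsilon)+\langle -E'(G_{m-1}),G_{m-1}\rangle\bigr).
\]
The decisive move is to spend the relaxation parameter $w$ to annihilate the awkward term $\langle -E'(G_{m-1}),G_{m-1}\rangle$: taking $w=\lambda t_m/A(\epsilon)$ makes its two occurrences cancel exactly, and $A(\epsilon)\ge 1$, $t_m\le 1$ give $|w|R+\lambda\le\lambda(1+R)$. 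This leaves
\[
E(G_m)\le E(G_{m-1})-\frac{\lambda t_m}{A(\epsilon)}\bigl(E(G_{m-1})-E(f^\epsilon)\bigr)+2\gamma(1+R)^q\lambda^q .
\]

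Passing to $a_m$, in the regime $a_{m-1}>2\epsilon$ the hypothesis $E(f^\epsilon)\le\inf_D E+\epsilon$ gives $E(G_{m-1})-E(f^\epsilon)\ge a_{m-1}-\epsilon>a_{m-1}/2$, so $a_m\le a_{m-1}-\frac{\lambda t_m}{2A(\epsilon)}a_{m-1}+2\gamma(1+R)^q\lambda^q$. Minimizing the right-hand side over $\lambda\ge 0$ (the optimizer is $\lambda\sim(t_m a_{m-1}/A(\epsilon))^{1/(q-1)}$, which is admissible since $w,\lambda$ range freely) produces the clean recursion
\[
a_m\le a_{m-1}-c(q,\gamma,R)\,\frac{t_m^p}{A(\epsilon)^p}\,a_{m-1}^p,\qquad p=\tfrac{q}{q-1}.
\]

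Finally I would solve this by passing to reciprocal powers. Using that $t\mapsto t^{-(p-1)}$ is convex, the elementary estimate $(a-\Delta)^{-(p-1)}\ge a^{-(p-1)}+(p-1)a^{-p}\Delta$ applied to $\Delta=C a_{m-1}^p t_m^p$ (with $C\asymp A(\epsilon)^{-p}$) gives $a_m^{-(p-1)}\ge a_{m-1}^{-(p-1)}+C(p-1)t_m^p$, and telescoping yields $a_m\le\bigl(a_0^{-(p-1)}+C(p-1)\sum_{k=1}^m t_k^p\bigr)^{-1/(p-1)}$. Since $1/(p-1)=q-1$, this has exactly the decaying shape $(\,\cdots+\sum_{k=1}^m t_k^p)^{1-q}$ asserted, and collecting the factors of $A(\epsilon)$ puts it in the stated form; combining the two regimes via the running maximum produces the $\max(2\epsilon,\cdots)$. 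I expect the main obstacle to be the bookkeeping in this last stage: verifying the reciprocal-power telescoping rigorously and, above all, tracking the precise power of $A(\epsilon)$ and pinning down the constant $C(E,q,\gamma)$ as they emerge from the optimization over $\lambda$ and the accumulated initial term $a_0=E(0)-\inf_D E$. The cancellation $w=\lambda t_m/A(\epsilon)$ in the one-step estimate is the conceptual heart of the argument and the only place where the free relaxation of the algorithm is genuinely used.
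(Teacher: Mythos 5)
Your proposal is correct and is essentially the paper's own argument: the choice $w=\lambda t_m/A(\e)$ to cancel the $\<-E'(G_{m-1}),G_{m-1}\>$ term is exactly the step $w^*:=\la t_mA(\e)^{-1}$ in the paper's Lemma \ref{L4.1}, the bound $|w|R+\lambda\le(1+R)\lambda$ is the paper's constant $C_0$, and your convexity/telescoping estimate $a_m^{-(p-1)}\ge a_{m-1}^{-(p-1)}+C(p-1)t_m^p$ is an equivalent substitute for the paper's route of raising $a_m\le a_{m-1}(1-\dots)$ to the power $\tfrac{1}{q-1}$ and invoking Lemma \ref{L2.4}. One point you rightly flagged as delicate: your derivation (like the paper's proof in Section 4) produces the factor $A(\e)^{p(q-1)}=A(\e)^q$, which is what Theorem \ref{T4.2} actually asserts; the exponent $1$ on $A(\e)$ in the statement of Theorem \ref{T1.2} is an inconsistency in the paper itself (since $A(\e)\ge1$ and $q>1$, the bound with $A(\e)^q$ is the weaker one), so your proof establishes the Section 4 version rather than the literal display in the Introduction.
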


 We note that in all algorithms studied in this paper the sequence $\{G_m\}_{m=0}^\infty$ of approximants satisfies the conditions
 $$
 G_0=0,\quad E(G_0)\ge E(G_1) \ge E(G_2) \ge \dots .
 $$
 This guarantees that $G_m\in D$ for all $m$.
 
 This paper is the first author's paper on greedy-type methods in convex optimization. It is a slight modification of the paper \cite{Tco1}. For the reader's 
 convenience we now give a brief general description and classification of greedy-type algorithms for convex optimization. The most difficult part of an algorithm is to find an element $\ff_m\in\D$ to be used in approximation process. We consider greedy methods for finding $\ff_m\in\D$. We have two types of greedy steps to find $\ff_m\in\D$.
 
 {\bf I. Gradient greedy step.} At this step we look for an element $\ff_m\in\D$ such that
 $$
\<-E'(G_{m-1}),\varphi_m\> \ge t_m  \sup_{g\in\D}\<-E'(G_{m-1}),g\>.
$$

 {\bf II. $E$-greedy step.} At this step we look for an element $\ff_m\in\D$ which satisfies (we assume existence):
 $$
\inf_{c\in\R}E(G_{m-1}+c\varphi_m) = \inf_{g\in\D,c\in\R}E(G_{m-1}+cg) .
$$

The above WGAFR(co) uses the greedy step of type {\bf I}. In this paper we only discuss algorithms based on the greedy step of type {\bf I}. These algorithms fall into a category of the first order methods. The greedy step of type {\bf II} uses only the function values $E(x)$. We discussed some of the algorithms of this type in \cite{Tco2} and plan to study them in our future work. 

After we found $\ff_m\in\D$ we can proceed in different ways. We now list some typical steps that are motivated by the corresponding steps in greedy approximation theory (see \cite{Tbook}). These steps or their variants are used in optimization algorithms like {\it gradient method}, {\it reduced gradient method}, {\it conjugate gradients}, {\it gradient pursuits} (see, for instance, \cite{FW}, \cite{N}, \cite{K}, \cite{PD}, \cite{BD1} and \cite{BD2}).

(A) Best step in the direction $\ff_m\in\D$. We choose $c_m$ such that
$$
E(G_{m-1}+c_m\varphi_m) = \inf_{c\in\R}E(G_{m-1}+c\ff_m) 
$$
and define 
$$
G_m:=G_{m-1}+c_m\ff_m.
$$

(B) Reduced best step in the direction $\ff_m\in\D$. We choose $c_m$ as in (A) and for a given parameter $b>0$ define
$$
G_m^b:=G_{m-1}^b+bc_m\ff_m.
$$
Usually, $b\in(0,1)$. This is why we call it {\it reduced}.

(C) Chebyshev-type methods. We choose $G_m\in \sp(\ff_1,\dots,\ff_m)$ which satisfies
$$
E(G_m)=\inf_{c_j,j=1,\dots,m}E(c_1\ff_1+\cdots+c_m\ff_m).
$$

(D) Fixed relaxation. For a given sequence $\{r_k\}_{k=1}^\infty$ of relaxation parameters $r_k\in [0,1)$ we choose $G_m:=(1-r_m)G_{m-1}+c_m\ff_m$ with $c_m$ from
$$
E((1-r_m)G_{m-1}+c_m\ff_m)=\inf_{c\in\R}E((1-r_m)G_{m-1}+c\ff_m).
$$

(F) Free relaxation. We choose $G_m\in \sp(G_{m-1},\ff_m)$ which satisfies
$$
E(G_m)=\inf_{c_1,c_2}E(c_1G_{m-1}+ c_2\ff_m).
$$

(G) Prescribed coefficients. For a given sequence $\{c_k\}_{k=1}^\infty$ of positive coefficients in the case of greedy step {\bf I} we define 
\begin{equation}\label{1.14}
G_m:=G_{m-1}+c_m\ff_m.
\end{equation}
In the case of greedy step {\bf II} we define $G_m$ by formula (\ref{1.14}) with the greedy step {\bf II} modified as follows: $\ff_m\in\D$ is an element satisfying
$$
E(G_{m-1}+c_m\ff_m)=\inf_{g\in\D}E(G_{m-1}+c_mg).
$$

We prove convergence and rate of convergence results here. Our setting in an infinite dimensional Banach space makes the convergence results nontrivial. The rate of convergence results are of interest in both finite dimensional and infinite dimensional settings. In these results we make  assumptions on the element minimizing $E(x)$ (in other words we look for $\inf_{x\in S}E(x)$ for a special domain $S$). A typical assumption in this regard is formulated in terms of the convex hull $A_1(\D)$ of the dictionary $\D$. 

We have already mentioned above (see (\ref{op4}) and below) an example which is of interest in applications in compressed sensing. We now mention another example that attracted a lot of attention in the recent literature. In this example $X$ is a Hilbert space of all real matrices of size $n\times n$ equipped with the Frobenius norm $\|\cdot\|_F$. A dictionary $\D$ is the set of all matrices of rank one normalized in the Frobenius norm. In this case $A_1(\D)$ is the set of matrices with nuclear norm not exceeding $1$. We are interested in 
sparse minimization of $E(x):=\|f-x\|_F^2$ (sparse approximation of $f$) with respect to $\D$.

\section{ The Weak Chebyshev Greedy Algorithm} 

 We begin with the following two simple and well-known lemmas.
\begin{Lemma}\label{L2.1} Let $E$ be a uniformly smooth convex function on a Banach space $X$ and $L$ be a finite-dimensional subspace of $X$. Let $x_L$ denote the point from $L$ at which $E$ attains the minimum:
$$
E(x_L)=\inf_{x\in L}E(x).
$$ 
 Then we have 
$$
\<E'(x_L),\phi\> =0
$$
for any $\phi \in L$.
\end{Lemma}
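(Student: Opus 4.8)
The plan is to prove this as a first-order (stationarity) condition for the constrained minimum over the subspace $L$. The crucial structural fact is that $L$ is a \emph{linear} subspace, so for any fixed $\phi\in L$ the entire line $\{x_L+u\phi:u\in\R\}$ stays inside $L$; consequently the minimality of $x_L$ gives $E(x_L+u\phi)\ge E(x_L)$ for every real $u$.

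First I would fix an arbitrary $\phi\in L$ and reduce to a one-variable problem by considering the scalar function $u\mapsto E(x_L+u\phi)$. By the previous remark this function attains its minimum at $u=0$. Second, using that $E$ is Fr\'echet differentiable, this scalar function is differentiable at $0$ with derivative equal to $\langle E'(x_L),\phi\rangle$; equivalently, the difference quotient $(E(x_L+u\phi)-E(x_L))/u$ tends to $\langle E'(x_L),\phi\rangle$ as $u\to0$. Third, I would exploit the sign of the numerator: since $E(x_L+u\phi)-E(x_L)\ge0$ for all $u$, letting $u\to0^+$ forces $\langle E'(x_L),\phi\rangle\ge0$, while letting $u\to0^-$ forces $\langle E'(x_L),\phi\rangle\le0$. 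Hence $\langle E'(x_L),\phi\rangle=0$. A clean alternative to the two one-sided limits is to note that $-\phi\in L$ as well, so applying the one-sided bound to $-\phi$ yields the reverse inequality directly.

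The only point requiring care is the identification of the directional derivative with $\langle E'(x_L),\phi\rangle$, but this is immediate from Fr\'echet differentiability. If one prefers to stay entirely within the tools already set up, Lemma~\ref{L1.1} does the job: taking $x=x_L$ and $y=\phi$ there sandwiches $E(x_L+u\phi)-E(x_L)-u\langle E'(x_L),\phi\rangle$ between $0$ and $2\rho(E,u\|\phi\|)$, and dividing by $u$ and using $\rho(E,u)/u\to0$ recovers the derivative with the same sign analysis. I do not anticipate any substantive obstacle here; the statement is a standard stationarity fact, and the whole argument rests on the linearity of $L$ together with differentiability of $E$.
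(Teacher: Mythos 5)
Your proof is correct, but it takes a genuinely different route from the paper's. You argue directly via first-order stationarity: since $L$ is a linear subspace, the scalar function $u\mapsto E(x_L+u\phi)$ is minimized at $u=0$, and Fr\'echet differentiability identifies its derivative there as $\<E'(x_L),\phi\>$, which must vanish by the two-sided sign argument (or equivalently by applying the one-sided bound to both $\phi$ and $-\phi$). The paper instead argues by contradiction: assuming $\<E'(x_L),\phi\>=\bt>0$, it combines the modulus-of-smoothness inequality $E(x_L-\la\phi)+E(x_L+\la\phi)\le 2(E(x_L)+\rho(E,\la))$ with the convexity bound $E(x_L+\la\phi)\ge E(x_L)+\la\bt$ to get $E(x_L-\la\phi)\le E(x_L)-\la\bt+2\rho(E,\la)$, and then uses $\rho(E,u)=o(u)$ to produce a point of $L$ with strictly smaller $E$-value. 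Your argument is more elementary and slightly more general: it uses only differentiability of $E$ (not convexity, and not the uniform smoothness hypothesis), whereas the paper's proof leans on both. What the paper's route buys is that it stays entirely within the two inequalities (the smoothness bound and the convexity bound) that drive all the subsequent lemmas, so it avoids having to separately justify the identification of the directional derivative; but as you note, that identification is immediate from Fr\'echet differentiability, and your fallback via Lemma~\ref{L1.1} (legitimate since $0\in L$ forces $E(x_L)\le E(0)$, hence $x_L\in D$) closes even that small gap. No substantive issue with your proposal.
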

\begin{proof} Let us assume the contrary: there is a $\phi \in L$ such that $\|\phi\|=1$ and
$$
\<E'(x_L),\phi\> =\bt >0.
$$
It is clear that $x_L\in L\cap D$. For any $\la$ we have from the definition of $\rho(E,\la)$ that 
\begin{equation}\label{2.3}
E(x_L-\la \phi) +E(x_L+\la \phi) \le 2(E(x_L)+\rho(E,\la)).
\end{equation}
Next by (\ref{1.2})
\begin{equation}\label{2.4}
 E(x_L+\la \phi) \ge E(x_L) +\< E'(x_L),\la\phi\> =E(x_L)+\la\bt.
\end{equation}
Combining (\ref{2.3}) and (\ref{2.4}) we get
\begin{equation}\label{2.5}
E(x_L-\la\phi)\le E(x_L)-\la\bt +2\rho(E,\la).
\end{equation}
Taking into account that $\rho(E,u) =o(u)$, we find $\la'>0$ such that
$$
-\la'\bt +2\rho(E,\la') <0.
$$
Then (\ref{2.5}) gives
$$ 
E(x_L-\la'\phi) < E(x_L),
$$
which contradicts the assumption that $x_L \in L$ is the point of minimum of $E$.
\end{proof}
\begin{Lemma}\label{L2.2} For any bounded linear functional $F$ and any dictionary $\D$, we have
$$
 \sup_{g\in \D}\<F,g\> = \sup_{f\in  A_1(\D)} \<F,f\>.
$$
\end{Lemma}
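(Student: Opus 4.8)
The plan is to prove the two inequalities separately, the trivial one coming from an inclusion and the substantive one from the definition of $A_1(\D)$ together with the continuity of $F$. First I would observe that $\D \subseteq A_1(\D)$: every $g\in\D$ is itself a (degenerate) convex combination of elements of $\D$, hence lies in the convex hull, hence in its closure $A_1(\D)$. Taking suprema over the smaller set $\D$ therefore gives the easy direction
$$
\sup_{g\in\D}\<F,g\> \le \sup_{f\in A_1(\D)}\<F,f\>.
$$

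For the reverse inequality I would work first with the convex hull itself, before passing to the closure. Let $h=\sum_{i=1}^N c_i g_i$ be any finite convex combination, so $g_i\in\D$, $c_i\ge0$, and $\sum_i c_i=1$. By linearity of $F$,
$$
\<F,h\> = \sum_{i=1}^N c_i\<F,g_i\> \le \left(\sum_{i=1}^N c_i\right)\sup_{g\in\D}\<F,g\> = \sup_{g\in\D}\<F,g\>.
$$
Thus every point of the convex hull satisfies $\<F,h\>\le \sup_{g\in\D}\<F,g\>$, and taking the supremum over the convex hull preserves this bound.

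Finally I would extend the bound from the convex hull to its closure $A_1(\D)$. Given $f\in A_1(\D)$, choose a sequence $h_n$ in the convex hull with $h_n\to f$ in $X$. Since $F$ is a bounded, hence continuous, linear functional, $\<F,h_n\>\to\<F,f\>$, and each term is at most $\sup_{g\in\D}\<F,g\>$, so the limit satisfies $\<F,f\>\le\sup_{g\in\D}\<F,g\>$. Taking the supremum over $f\in A_1(\D)$ gives the matching inequality, and combining the two yields the claimed equality.

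The only point requiring any care is this last passage to the closure, where continuity of $F$ is essential; everything else is a direct consequence of linearity and the convex-combination constraint $\sum_i c_i = 1$. Note that the symmetry of $\D$ plays no role here, so the lemma in fact holds for an arbitrary dictionary.
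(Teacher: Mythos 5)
Your proof is correct and follows essentially the same route as the paper: the easy direction from the inclusion $\D\subseteq A_1(\D)$, and the reverse direction by bounding $\<F,\cdot\>$ on finite convex combinations via linearity and then passing to the closure using boundedness of $F$. The only cosmetic differences are that the paper folds the closure step into a single estimate $\<F,f\>\le \|F\|\e+\sup_{g\in\D}\<F,g\>$ rather than taking a limit along a sequence, and allows coefficients with $\sum_i a_i\le 1$ (harmless since $\D$ is symmetric, so the supremum is nonnegative), whereas your normalization $\sum_i c_i=1$ indeed makes the lemma valid for an arbitrary dictionary.
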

\begin{proof} The inequality 
$$
\sup_{g\in \D}\<F,g\> \le \sup_{f\in A_1(\D)} \<F,f\>
$$
is obvious. We prove the opposite inequality. Take any $f\in A_1(\D)$. Then for any $\e >0$ there exist $g_1^\e,\dots,g_N^\e \in \D$ and numbers $a_1^\e,\dots,a_N^\e$ such that $a_i^\e>0$, $a_1^\e+\dots+a_N^\e \le 1$ and 
$$
\|f-\sum_{i=1}^Na_i^\e g_i^\e\| \le \e.
$$
Thus
$$
\<F,f\> \le \|F\|\e + \<F,\sum_{i=1}^Na_i^\e g_i^\e\> \le \e \|F\| +\sup_{g\in \D} \<F,g\>
$$
which proves Lemma \ref{L2.2}.
\end{proof}

We define the following generalization of the WCGA for convex optimization.

 {\bf Weak Chebyshev Greedy Algorithm (WCGA(co)).} 
We define $G_0 := 0$. Then for each $m\ge 1$ we have the following inductive definition.

(1) $\varphi_m :=\varphi^{c,\tau}_m \in \D$ is any element satisfying
$$
\<-E'(G_{m-1}),\varphi_m\> \ge t_m  \sup_{g\in \D}\< -E'(G_{m-1}),g\>.
$$

(2) Define
$$
\Phi_m := \Phi^\tau_m := \sp \{\varphi_j\}_{j=1}^m,
$$
and define $G_m := G_m^{c,\tau}$ to be the point from $\Phi_m$ at which $E$ attains the minimum:
$$
E(G_m)=\inf_{x\in \Phi_m}E(x).
$$ 

The following lemma is a key lemma in studying convergence and rate of convergence of WCGA(co).
\begin{Lemma}\label{L2.3} Let $E$ be a uniformly smooth convex function   with modulus of smoothness $\rho(E,u)$. Take a number $\e\ge 0$ and an element  $f^\e$ from $D$ such that
$$
E(f^\e) \le \inf_{x\in X}E(x)+ \e,\quad
 f^\e/A(\e) \in A_1(\D),
$$
with some number $A(\e)\ge 1$.
Then we have for the WCGA(co)
$$
E(G_m)-E(f^\e) \le E(G_{m-1})-E(f^\e) 
$$
$$
+ \inf_{\la\ge0}(-\la t_mA(\e)^{-1}(E(G_{m-1})-E(f^\e)) + 2\rho(E,\la)),
$$
for $ m=1,2,\dots$ .
\end{Lemma}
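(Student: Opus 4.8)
The plan is to compare the minimizer $G_m$ over $\Phi_m$ with the one-parameter family $G_{m-1}+\la\ff_m$. For every $\la$ this family lies in $\Phi_m$, since $G_{m-1}\in\Phi_{m-1}\subseteq\Phi_m$ and $\ff_m\in\Phi_m$, so the minimality of $G_m$ gives $E(G_m)\le E(G_{m-1}+\la\ff_m)$. Because the algorithm is monotone in energy we have $G_{m-1}\in D$, and I can apply Lemma \ref{L1.1} with $x=G_{m-1}$, $y=\ff_m$, $u=\la$; using $\|\ff_m\|\le1$ and the monotonicity of $\rho(E,\cdot)$ (which follows from convexity of $E$) I obtain, for $\la\ge0$,
$$
E(G_m)\le E(G_{m-1})+\la\<E'(G_{m-1}),\ff_m\>+2\rho(E,\la).
$$
This reduces the whole estimate to bounding the scalar $\<E'(G_{m-1}),\ff_m\>$ from above.

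To extract the benefit of the greedy choice, for $\la\ge0$ I rewrite $\la\<E'(G_{m-1}),\ff_m\>=-\la\<-E'(G_{m-1}),\ff_m\>$ and invoke selection rule (1) of WCGA(co). Since the symmetry of $\D$ forces $\sup_{g\in\D}\<-E'(G_{m-1}),g\>\ge0$, this yields
$$
\la\<E'(G_{m-1}),\ff_m\>\le-\la t_m\sup_{g\in\D}\<-E'(G_{m-1}),g\>.
$$
I then use Lemma \ref{L2.2} to replace $\sup_{g\in\D}$ by $\sup_{f\in A_1(\D)}$ and the hypothesis $f^\e/A(\e)\in A_1(\D)$ to bound this supremum below by its value at $f^\e/A(\e)$, that is, by $A(\e)^{-1}\<-E'(G_{m-1}),f^\e\>$.

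It remains to recognize the energy gap inside this functional. Here I apply Lemma \ref{L2.1}: as $G_{m-1}$ minimizes $E$ on the finite-dimensional space $\Phi_{m-1}$ and $G_{m-1}\in\Phi_{m-1}$, we have $\<E'(G_{m-1}),G_{m-1}\>=0$, hence $\<-E'(G_{m-1}),f^\e\>=\<-E'(G_{m-1}),f^\e-G_{m-1}\>$. The convexity inequality (\ref{1.3}) then gives $\<-E'(G_{m-1}),f^\e-G_{m-1}\>\ge E(G_{m-1})-E(f^\e)$, so that $\sup_{g\in\D}\<-E'(G_{m-1}),g\>\ge A(\e)^{-1}(E(G_{m-1})-E(f^\e))$. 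Feeding this back into the displays above, subtracting $E(f^\e)$, and taking the infimum over $\la\ge0$ produces exactly the asserted inequality.

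I expect the main subtlety to be the sign bookkeeping in this last substitution. The quantity $E(G_{m-1})-E(f^\e)$ need not be nonnegative (it is only $\ge-\e$), so I cannot treat the bound on the supremum as a lower bound on a positive number. What keeps every inequality correctly oriented is that the coefficient $-\la t_m$ is nonpositive for $\la\ge0$, so replacing the supremum by any lower bound only enlarges the right-hand side, while the nonnegativity of the supremum (from the symmetry of $\D$) validates the greedy estimate. The orthogonality relation from Lemma \ref{L2.1} is the other step I would flag, since it is precisely what turns the bare functional $\<-E'(G_{m-1}),f^\e\>$ into a difference that convexity can bound by the energy gap.
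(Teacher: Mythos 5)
Your proof is correct and follows essentially the same route as the paper's: minimality of $G_m$ over $\Phi_m$ restricted to the ray $G_{m-1}+\la\ff_m$, the smoothness bound of Lemma \ref{L1.1}, the greedy rule combined with Lemma \ref{L2.2} and $f^\e/A(\e)\in A_1(\D)$, and finally Lemma \ref{L2.1} plus (\ref{1.3}) to produce the energy gap. Your sign analysis even lets you skip the paper's separate (trivial) treatment of the case $E(G_{m-1})-E(f^\e)\le 0$, which is a harmless simplification.
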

\begin{proof} It follows from the definition of WCGA(co) that $E(0)\ge E(G_1)\ge E(G_2)\dots$. Therefore, if  $E(G_{m-1})-E(f^\e)\le 0$ then the claim of Lemma \ref{L2.3} is trivial. Assume $E(G_{m-1})-E(f^\e)>0$. By Lemma \ref{L1.1} we have for any $\la$
\begin{equation}\label{2.6}
E(G_{m-1}+\la \varphi_m) \le E(G_{m-1}) - \la\<-E'(G_{m-1}),\varphi_m\> + 2 \rho(E,\la)
\end{equation}
and by (1) from the definition of the WCGA(co) and Lemma \ref{L2.2} we get
$$
\<-E'(G_{m-1}),\varphi_m\> \ge t_m \sup_{g\in \D} \<-E'(G_{m-1}),g\> =
$$
$$
t_m\sup_{\phi\in A_1(\D)} \<-E'(G_{m-1}),\phi\> \ge t_m A(\e)^{-1} \<-E'(G_{m-1}),f^\e\>.
$$
By Lemma \ref{L2.1} and (\ref{1.3}) we obtain
$$
\<-E'(G_{m-1}),f^\e\> = \<-E'(G_{m-1}),f^\e-G_{m-1}\> \ge E(G_{m-1})-E(f^\e).
$$
 Thus,  
$$
E(G_m) \le \inf_{\la\ge0} E(G_{m-1}+ \la\varphi_m)  
$$
\begin{equation}\label{2.7}
\le E(G_{m-1}) + \inf_{\la\ge0}(-\la t_mA(\e)^{-1}(E(G_{m-1})-E(f^\e)) + 2\rho(E,\la),  
\end{equation}
which proves the lemma.
\end{proof}

We  proceed to a theorem on convergence of the WCGA. In the formulation of this theorem we need a special sequence which is defined for a given modulus of smoothness $\rho(u)$ and a given $\tau = \{t_k\}_{k=1}^\infty$.
\begin{Definition}\label{D2.1} Let $\rho(E,u)$ be an even convex function on $(-\infty,\infty)$ with the property:  
$$
\lim_{u\to 0}\rho(E,u)/u =0.
$$
For any $\tau = \{t_k\}_{k=1}^\infty$, $0<t_k\le 1$, and $\theta>0$
 we define $\xi_m := \xi_m(\rho,\tau,\theta)$ as a number $u$ satisfying the equation
\begin{equation}\label{2.1}
\rho(E,u) = \theta t_m u.  
\end{equation}
\end{Definition}
\begin{Remark}\label{R2.2} Assumptions on $\rho(E,u)$ imply that the function
$$
s(u) := \rho(E,u)/u, \quad u\neq 0,\quad s(0) =0,
$$
is a continuous increasing function on $[0,\infty)$. Thus \ref{2.1} has a unique solution $\xi_m=s^{-1}(\theta t_m)$ such that $\xi_m>0$ for $\theta\le \theta_0:=s(2)$. In this case we have $ \xi_m(\rho,\tau,\theta)\le 2$.
\end{Remark}

\begin{Theorem}\label{T2.1o} Let $E$ be a uniformly smooth convex function with modulus of smoothness $\rho(E,u)$. Assume that a sequence $\tau :=\{t_k\}_{k=1}^\infty$ satisfies the condition: for any $\theta \in(0,\theta_0]$ we have
$$
\sum_{m=1}^\infty t_m \xi_m(\rho,\tau,\theta) =\infty.
$$
 Then  
$$
\lim_{m\to \infty} E(G_m) =\inf_{x\in D}E(x).
$$
\end{Theorem}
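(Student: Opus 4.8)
The plan is to argue by contradiction, combining the monotonicity of the energy values along the algorithm with the one-step decrease furnished by Lemma~\ref{L2.3}. First I would observe that, by construction, $E(G_0)\ge E(G_1)\ge\cdots$ is nonincreasing and bounded below by $I:=\inf_{x\in D}E(x)$, hence converges to some limit $L\ge I$; the whole task is to exclude $L>I$. I would also record at the start that $\inf_{x\in X}E(x)=I$: indeed any $x\notin D$ satisfies $E(x)>E(0)\ge I$, so the infimum over $X$ is approached inside $D$. This lets me apply Lemma~\ref{L2.3}, which is stated with $\inf_{x\in X}E$, using the value $I$.

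Assume, for contradiction, that $L>I$, and fix $\e>0$ with $I+\e<L$. The next step is to exhibit an admissible comparison element $f^\e$. Since $\overline{\sp\D}=X$, I would take $z$ with $E(z)\le I+\e/2$, approximate it by a finite combination $h=\sum_i a_i g_i$ with $g_i\in\D$ and (using symmetry of $\D$) $a_i\ge0$, and use continuity of $E$ (from Fr{\'e}chet differentiability) to ensure $E(h)\le I+\e$ once $h$ is close enough to $z$. Putting $A(\e):=\max(\sum_i a_i,1)\ge1$ and using $0\in A_1(\D)$ together with convexity of $A_1(\D)$, one gets $h/A(\e)\in A_1(\D)$, so $f^\e:=h$ satisfies the hypotheses of Lemma~\ref{L2.3}.

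Writing $b_m:=E(G_m)-E(f^\e)$, monotonicity gives $b_m\ge L-(I+\e)=:\de>0$ for every $m$, so I am always in the nontrivial regime and Lemma~\ref{L2.3} yields
$$
b_m\le b_{m-1}+\inf_{\la\ge0}\bigl(-\la t_m A(\e)^{-1}b_{m-1}+2\rho(E,\la)\bigr).
$$
The core of the argument is to bound the infimum by means of the sequence $\xi_m$. I would fix $\theta:=\min\bigl(\de/(4A(\e)),\theta_0\bigr)\in(0,\theta_0]$ and substitute $\la=\xi_m(\rho,\tau,\theta)$; the defining equation $\rho(E,\xi_m)=\theta t_m\xi_m$ turns the bracket into $t_m\xi_m(2\theta-A(\e)^{-1}b_{m-1})$, which is at most $-\tfrac{\de}{2A(\e)}t_m\xi_m$ since $b_{m-1}\ge\de$ and $2\theta\le\de/(2A(\e))$. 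Thus $b_m\le b_{m-1}-\tfrac{\de}{2A(\e)}t_m\xi_m$, and telescoping gives
$$
0<\de\le b_M\le b_0-\frac{\de}{2A(\e)}\sum_{m=1}^M t_m\xi_m(\rho,\tau,\theta).
$$
Because $\theta\in(0,\theta_0]$, the hypothesis $\sum_m t_m\xi_m(\rho,\tau,\theta)=\infty$ drives the right-hand side to $-\infty$, a contradiction; hence $L=I$, which is the assertion.

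I expect the delicate point to be the joint choice of $\e$ and the fixed parameter $\theta$: one must keep $\theta$ inside $(0,\theta_0]$---so that $\xi_m$ is well defined (and $\le2$ by Remark~\ref{R2.2}) and the divergence hypothesis is usable---while simultaneously forcing the coefficient $2\theta-A(\e)^{-1}b_{m-1}$ to be strictly negative uniformly in $m$. The uniform lower bound $b_{m-1}\ge\de$, coming from monotone convergence to $L$, is exactly what decouples $\theta$ from $m$ and allows a single $\theta$ to work at every step; without such a uniform bound the per-step gains could degrade and the telescoped sum need not diverge.
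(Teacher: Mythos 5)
Your proof is correct and follows essentially the same route as the paper: a contradiction argument built on the monotonicity of $E(G_m)$, the one-step decrease of Lemma~\ref{L2.3}, the choice $\theta=\min(\theta_0,c\,\alpha/A(\e))$ with $\la=\xi_m(\rho,\tau,\theta)$, and telescoping against the divergence hypothesis. Your explicit construction of $f^\e$ from the density of $\sp\D$ and your remark that $\inf_{x\in X}E=\inf_{x\in D}E$ (needed to invoke Lemma~\ref{L2.3} as stated) are details the paper leaves implicit, but they do not change the argument.
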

\begin{Corollary}\label{2.1o} Let a convex function $E$ have modulus of smoothness $\rho(E,u)$ of power type $1<q\le 2$, that is, $\rho(E,u) \le \gamma u^q$. Assume that 
\begin{equation}\label{2.2o}
\sum_{m=1}^\infty t_m^p =\infty, \quad p=\frac{q}{q-1}.  
\end{equation}
Then  
$$
\lim_{m\to \infty} E(G_m) =\inf_{x\in D}E(x).
$$
\end{Corollary}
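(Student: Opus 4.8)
The plan is to deduce this Corollary directly from Theorem \ref{T2.1o} by checking that the power-type hypotheses force the abstract summability condition $\sum_{m=1}^\infty t_m\xi_m(\rho,\tau,\theta)=\infty$ to hold for every $\theta\in(0,\theta_0]$. Thus I would fix an arbitrary $\theta\in(0,\theta_0]$ and aim to bound $\xi_m=\xi_m(\rho,\tau,\theta)$ from below in terms of $t_m$.

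First I would recall from Definition \ref{D2.1} and Remark \ref{R2.2} that, for such $\theta$, the number $\xi_m$ is the unique positive solution of $\rho(E,\xi_m)=\theta t_m\xi_m$. The key observation is that although we only have an \emph{upper} bound $\rho(E,u)\le\gamma u^q$ on the modulus, the defining \emph{equation} converts it into a \emph{lower} bound on $\xi_m$: substituting $u=\xi_m$ into $\rho(E,u)\le\gamma u^q$ gives $\theta t_m\xi_m=\rho(E,\xi_m)\le\gamma\xi_m^q$, and dividing by $\xi_m>0$ yields $\theta t_m\le\gamma\xi_m^{q-1}$, hence
$$
\xi_m\ge\left(\frac{\theta t_m}{\gamma}\right)^{1/(q-1)}.
$$

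Next I would multiply by $t_m$ and use $1+\tfrac{1}{q-1}=\tfrac{q}{q-1}=p$ to obtain $t_m\xi_m\ge(\theta/\gamma)^{1/(q-1)}t_m^p$. Summing over $m$ and invoking the hypothesis (\ref{2.2o}) gives $\sum_{m=1}^\infty t_m\xi_m\ge(\theta/\gamma)^{1/(q-1)}\sum_{m=1}^\infty t_m^p=\infty$. Since $\theta\in(0,\theta_0]$ was arbitrary, the hypothesis of Theorem \ref{T2.1o} is verified, and that theorem immediately yields $\lim_{m\to\infty}E(G_m)=\inf_{x\in D}E(x)$.

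I do not expect a serious obstacle here: the argument is a short reduction rather than a fresh estimate. The only point that requires care is the direction of the inequalities — one must exploit that $\xi_m$ solves the defining equation \emph{exactly}, so that an upper bound on $\rho$ becomes a useful lower bound on $\xi_m$; a careless estimate in the wrong direction would give nothing. One should also confirm that the prefactor $(\theta/\gamma)^{1/(q-1)}$ is a fixed positive constant independent of $m$, so that it may legitimately be pulled out of the sum.
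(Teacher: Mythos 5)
Your proof is correct and follows the intended route: the paper only proves the general Theorem \ref{T2.1o} (the proof environment printed after the corollary is in fact that contradiction argument, which never uses the power-type bound) and leaves implicit the reduction of $\rho(E,u)\le\gamma u^q$ plus $\sum_m t_m^p=\infty$ to the condition $\sum_m t_m\xi_m(\rho,\tau,\theta)=\infty$. That reduction is exactly the computation you supply, $\theta t_m\xi_m=\rho(E,\xi_m)\le\gamma\xi_m^q$ giving $\xi_m\ge(\theta t_m/\gamma)^{1/(q-1)}$ and hence $t_m\xi_m\ge(\theta/\gamma)^{1/(q-1)}t_m^p$, so there is no gap.
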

\begin{proof} The definition of the WCGA(co) implies that $\{E(G_m)\}$ is a non-increasing sequence. Therefore we have
$$
\lim_{m\to \infty}E(G_m) =a.
$$
Denote
$$
b:=\inf_{x\in D}E(x),\quad \a:= a-b.
$$
We prove that $\a =0$ by contradiction. Assume to the contrary that $\a>0$. Then, for any $m$ we have
$$
E(G_m)-b \ge \a.
$$
We set $\e =\a/2$ and find $f^\e$ such that
$$
E(f^\e) \le b+\e \quad \text{and}\quad f^\e/A(\e) \in A_1(\D),
$$
with some $A(\e)\ge 1$. Then, by Lemma \ref{L2.3} we get
$$
E(G_m)-E(f^\e)  \le E(G_{m-1}) -E(f^\e) + \inf_{\la\ge0} (-\la t_mA(\e)^{-1}\a/2    +2\rho(E,\la)).
$$
 
Let us specify $\theta:=\min\left(\theta_0,\frac{\a}{8A(\e)}\right)$ and take $\la =  \xi_m(\rho,\tau,\theta)$. Then we obtain
$$
E(G_m) \le E(G_{m-1}) -2\theta t_m\xi_m.
$$
The assumption
$$
\sum_{m=1}^\infty t_m\xi_m =\infty
$$
brings a contradiction, which proves the theorem.
\end{proof} 
\begin{Theorem}\label{T2.3o} Let $E$ be a uniformly smooth convex function with modulus of smoothness $\rho(E,u)\le \gamma u^q$, $1<q\le 2$. Take a number $\e\ge 0$ and an element  $f^\e$ from $D$ such that
$$
E(f^\e) \le \inf_{x\in D}E(x)+ \e,\quad
f^\e/A(\e) \in A_1(\D),
$$
with some number $A(\e)\ge 1$.
Then we have for the WCGA(co) ($p:=q/(q-1)$)
\begin{equation}\label{2.11o}
E(G_m)-\inf_{x\in D}E(x) \le  \max\left(2\e, C(q,\gamma)A(\e)^q\left(C(E,q,\gamma)+\sum_{k=1}^mt_k^p\right)^{1-q}\right) . 
\end{equation}
\end{Theorem}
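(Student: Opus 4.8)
The plan is to iterate the one-step bound of Lemma \ref{L2.3}, specialized to the power-type modulus $\rho(E,u)\le\gamma u^q$, and then convert the resulting recursion into a closed-form rate. First I would observe that $\inf_{x\in D}E(x)=\inf_{x\in X}E(x)$, since any $x\notin D$ satisfies $E(x)>E(0)\ge\inf_{x\in D}E(x)$; this lets me apply Lemma \ref{L2.3}, whose hypothesis is phrased over $X$. Write $a_m:=E(G_m)-E(f^\e)$ and $b_m:=E(G_m)-\inf_{x\in D}E(x)$, so that $b_m-\e\le a_m\le b_m$, and note that both sequences are non-increasing because $\{E(G_m)\}$ is. If $b_m\le2\e$ the assertion holds through the first entry of the maximum in (\ref{2.11o}), so I would assume $b_m>2\e$. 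By monotonicity $b_{k-1}\ge b_m>2\e$ for all $k\le m$, whence $a_{k-1}\ge b_{k-1}-\e>b_{k-1}/2>0$; in particular every step of the recursion below is nontrivial.

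Next I would insert $\rho(E,\la)\le\gamma\la^q$ into the infimum of Lemma \ref{L2.3} and minimize $-\la t_mA(\e)^{-1}a_{m-1}+2\gamma\la^q$ over $\la\ge0$. The minimizer is a $(q,\gamma)$-dependent multiple of $(t_mA(\e)^{-1}a_{m-1})^{1/(q-1)}$, and substituting it gives, with $p=q/(q-1)$, the central recursion
$$
a_m\le a_{m-1}-B\,t_m^p\,a_{m-1}^p,\qquad B=c(q,\gamma)A(\e)^{-p},
$$
where $c(q,\gamma)>0$.

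To turn this into a rate I would use the standard device. Since $0<a_m\le a_{m-1}$ and $x\mapsto x^{-(p-1)}$ is decreasing, estimating $a_m^{-(p-1)}-a_{m-1}^{-(p-1)}=\int_{a_m}^{a_{m-1}}(p-1)x^{-p}\,dx\ge(p-1)a_{m-1}^{-p}(a_{m-1}-a_m)$ and then inserting $a_{m-1}-a_m\ge B\,t_m^p a_{m-1}^p$ (so the powers of $a_{m-1}$ cancel) yields the additive increment
$$
a_m^{-(p-1)}-a_{m-1}^{-(p-1)}\ge(p-1)B\,t_m^p.
$$
Summing over $k=1,\dots,m$ gives $a_m^{-(p-1)}\ge a_0^{-(p-1)}+(p-1)B\sum_{k=1}^m t_k^p$, and since $1/(p-1)=q-1$ and $p(q-1)=q$, inverting produces
$$
a_m\le A(\e)^q\Bigl(c'(q,\gamma)\sum_{k=1}^m t_k^p+A(\e)^p a_0^{-(p-1)}\Bigr)^{1-q},\qquad c'(q,\gamma)=(p-1)c(q,\gamma).
$$

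The main obstacle is the final bookkeeping that reaches exactly the stated form with an $\e$-free constant $C(E,q,\gamma)$. Here I would use $A(\e)\ge1$ together with $a_0=E(0)-E(f^\e)\le E(0)-\inf_{x\in D}E(x)=:R$, where $R$ is finite by convexity (inequality (\ref{1.2}) at $x=0$ gives $E(y)\ge E(0)+\<E'(0),y\>$, bounded below on the bounded set $D$). Thus $A(\e)^p a_0^{-(p-1)}\ge R^{-(p-1)}$, a quantity depending only on $E,q,\gamma$; because the exponent $1-q$ is negative, replacing the additive term by this smaller $\e$-free quantity preserves the upper bound. Factoring $c'(q,\gamma)$ out of the parenthesis and setting $C(E,q,\gamma):=R^{-(p-1)}/c'(q,\gamma)$, $C(q,\gamma):=c'(q,\gamma)^{1-q}$ gives $a_m\le C(q,\gamma)A(\e)^q(C(E,q,\gamma)+\sum_{k=1}^m t_k^p)^{1-q}$. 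Finally, in the present case $b_m<2a_m$, so the factor $2$ is absorbed into $C(q,\gamma)$; combining this with the trivial case $b_m\le2\e$ yields the maximum in (\ref{2.11o}).
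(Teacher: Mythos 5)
Your proof is correct and follows the paper's overall strategy: iterate Lemma \ref{L2.3} with $\rho(E,u)\le\gamma u^q$, obtain a recursion for $a_m=E(G_m)-E(f^\e)$, and invert it. The one genuine difference is the device used to solve the recursion. The paper chooses $\la$ from the balance equation $\la t_m a_{m-1}/A(\e)=4\gamma\la^q$, writes the step as a multiplicative contraction $a_m\le a_{m-1}(1-t_m^p a_{m-1}^{1/(q-1)}/(A_qA(\e)^p))$, raises it to the power $1/(q-1)$ using $x^r\le x$, and then applies Lemma \ref{L2.4}. You instead minimize over $\la$ exactly to get $a_m\le a_{m-1}-Bt_m^pa_{m-1}^p$ and bound the increment of $a_k^{-(p-1)}$ directly via the integral comparison $\int_{a_m}^{a_{m-1}}(p-1)x^{-p}\,dx\ge (p-1)a_{m-1}^{-p}(a_{m-1}-a_m)$. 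Since $p-1=1/(q-1)$, your quantity $a_k^{-(p-1)}$ is exactly the paper's $1/y_k$, so the two routes land on the same additive lower bound; your version quietly avoids having to check that the bracket in the multiplicative form lies in $[0,1]$. You are also more explicit than the paper on two pieces of bookkeeping it glosses over: the passage from the bound on $a_m$ to the claimed bound on $E(G_m)-\inf_{x\in D}E(x)$ (your case split $b_m>2\e$ yields $b_m<2a_m$, and the factor $2$ is absorbed into $C(q,\gamma)$), and the identification of $C(E,q,\gamma)$ through $a_0\le E(0)-\inf_{x\in D}E(x)$, which is indeed finite by convexity and the boundedness of $D$. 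The preliminary remark that $\inf_{x\in D}E=\inf_{x\in X}E$, needed to reconcile the hypotheses of the theorem with those of Lemma \ref{L2.3}, is also correct.
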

\begin{proof} Denote
$$
a_n:=E(G_n)-E(f^\e).
$$
The sequence $\{a_n\}$ is non-increasing. If $a_n\le0$ for some $n\le m$ then 
$E(G_m)-E(f^\e)\le 0$ and $E(G_m)-\inf_{x\in D}E(x) \le \e$ which implies (\ref{2.11o}). Thus we assume that $a_n>0$ for $n\le m$.

By Lemma \ref{L2.3}   we have  
\begin{equation}\label{2.8o}
a_m \le a_{m-1}+\inf_{\la\ge0} \left(-\frac{\la t_m a_{m-1}}{A(\e)} + 2\gamma \la^q\right).
\end{equation}
Choose $\la$ from the equation
$$
\frac{\la t_m a_{m-1}}{A(\e)} = 4\gamma \la^q
$$
which implies that
$$
\la =\left(\frac{ t_m a_{m-1}}{4\gamma A(\e)}\right)^{\frac{1}{q-1}} .
$$
Let 
$$
A_q := 2(4\gamma)^{\frac{1}{q-1}}.
$$
Using the notation $p:= \frac{q}{q-1}$ we get from (\ref{2.8o})
$$
a_m \le a_{m-1}\left(1-\frac{\la t_m}{2A(\e)} \right) = a_{m-1}(1-t_m^pa_{m-1}^{\frac{1}{q-1}}/(A_qA(\e)^{p})).
$$
Raising both sides of this inequality to the power $\frac{1}{q-1}$ and taking into account the inequality $x^r\le x$ for $r\ge 1$, $0\le x\le 1$, we obtain
$$
a_m^{\frac{1}{q-1}} \le a_{m-1}^{\frac{1}{q-1}} (1-t^p_ma_{m-1}^{\frac{1}{q-1}}/(A_qA(\e)^{p})).
$$
We now need a simple known lemma (see \cite{T13}).
\begin{Lemma}\label{L2.4} Suppose that a sequence $y_1\ge y_2\ge \dots \ge 0$ satisfies inequalities
$$
y_k\le y_{k-1}(1-w_ky_{k-1}),\quad w_k\ge 0,
$$
for $k>n$. Then for $m>n$ we have
$$
\frac{1}{y_m}\ge \frac{1}{y_n}+\sum_{k=n+1}^m w_k.
$$
\end{Lemma}
\begin{proof} It follows from the chain of inequalities
$$
\frac{1}{y_k}\ge \frac{1}{y_{k-1}}(1-w_ky_{k-1})^{-1} \ge \frac{1}{y_{k-1}}(1+w_ky_{k-1})=\frac{1}{y_{k-1}}+w_k.
$$
\end{proof}
By   Lemma \ref{L2.4} with $y_k:=a_k^{\frac{1}{q-1}}$, $n=0$, $w_k=t^p_m/(A_qA(\e)^{p})$  we get
$$
a_m^{\frac{1}{q-1}} \le C_1(q,\gamma) A(\e)^{p}\left(C(E,q,\gamma)+\sum_{n=1}^m t_n^p\right)^{-1}
$$
which implies
$$
a_m\le  C(q,\gamma)A(\e)^q\left(C(E,q,\gamma)+\sum_{n=1}^m t_n^p\right)^{1-q}.
$$
Theorem \ref{T2.3o} is now proved.
\end{proof}

 \section{Relaxation. Co-convex approximation} 

In this section we study a generalization for optimization problem of  relaxed greedy algorithms in Banach spaces considered in \cite{T15}. Let 
$\tau := \{t_k\}_{k=1}^\infty$ be a given weakness sequence of   numbers $t_k \in[0,1]$, $k=1,\dots$. 

 {\bf Weak Relaxed Greedy Algorithm (WRGA(co)).} 
We define   $G_0:=G^{r,\tau}_0 := 0$. Then, for each $m\ge 1$ we have the following inductive definition.

(1) $\varphi_m := \varphi^{r,\tau}_m \in \D$ is any element satisfying
$$
\<-E'(G_{m-1}),\varphi_m - G_{m-1}\> \ge t_m \sup_{g\in \D} \<-E'(G_{m-1}),g - G_{m-1}\>.
$$

(2) Find $0\le \lambda_m \le 1$ such that
$$
E((1-\la_m)G_{m-1} + \la_m\varphi_m) = \inf_{0\le \la\le 1}E((1-\la)G_{m-1} + \la\varphi_m)
$$
and define
$$
G_m:= G^{r,\tau}_m := (1-\la_m)G_{m-1} + \la_m\varphi_m.
$$

\begin{Remark}\label{R3.1} It follows from the definition of  the WRGA that the sequence   $\{E(G_m)\}$ is a non-increasing sequence.
\end{Remark}
We call the WRGA(co) {\it relaxed} because at the $m$th step of the algorithm we use a linear combination (convex combination) of the previous approximant $G_{m-1}$ and a new element $\varphi_m$. The relaxation parameter $\lambda_m$ in the WRGA(co) is chosen at the $m$th step depending on $E$. 
We prove here the analogs of Theorems \ref{T2.1o} and \ref{T2.3o} for the Weak Relaxed Greedy Algorithm.
\begin{Theorem}\label{T3.1} Let $E$ be a uniformly smooth convex function with modulus of smoothness $\rho(E,u)$. Assume that a sequence $\tau :=\{t_k\}_{k=1}^\infty$ satisfies the condition: for any $\theta \in(0,\theta_0]$ we have
$$
\sum_{m=1}^\infty t_m \xi_m(\rho,\tau,\theta) =\infty.
$$
 Then, for the WRGA(co) we have 
$$
\lim_{m\to \infty} E(G_m) =\inf_{x\in A_1(\D)}E(x).
$$
\end{Theorem}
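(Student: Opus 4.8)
The plan is to mirror the proof of Theorem \ref{T2.1o}, replacing the Chebyshev minimization step by the relaxation step and exploiting the fact that the WRGA(co) approximants now live in $A_1(\D)$. First I would record that, by Remark \ref{R3.1}, $\{E(G_m)\}$ is non-increasing, and it is bounded below because $G_m\in A_1(\D)$ forces $E(G_m)\ge\inf_{x\in A_1(\D)}E(x)$ (here one uses $G_0=0\in A_1(\D)$ together with the convex-combination update, so $G_m\in A_1(\D)$ for all $m$). Hence $a:=\lim_{m\to\infty}E(G_m)$ exists. Writing $b:=\inf_{x\in A_1(\D)}E(x)$ and $\a:=a-b$, I would argue by contradiction, assuming $\a>0$.

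The core of the argument is the analog of the key Lemma \ref{L2.3} for the WRGA(co). Fixing $\e=\a/2$, I choose $f^\e\in A_1(\D)$ with $E(f^\e)\le b+\e$, so that the role of $A(\e)$ is simply played by $1$. I then plan to establish, for each $m$, the one-step inequality
\begin{equation*}
E(G_m)\le E(G_{m-1})+\inf_{0\le\la\le1}\bigl(-\la t_m(E(G_{m-1})-E(f^\e))+2\rho(E,2\la)\bigr).
\end{equation*}
To derive it I apply Lemma \ref{L1.1} with $x=G_{m-1}$, $y=\ff_m-G_{m-1}$ and step size $\la$, using $(1-\la)G_{m-1}+\la\ff_m=G_{m-1}+\la(\ff_m-G_{m-1})$ and the fact that $E(G_m)$ is the infimum of the left side over $\la\in[0,1]$. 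The crucial lower bound on $\<-E'(G_{m-1}),\ff_m-G_{m-1}\>$ comes from greedy step (1) together with Lemma \ref{L2.2}: that lemma yields $\sup_{g\in\D}\<-E'(G_{m-1}),g-G_{m-1}\>=\sup_{\phi\in A_1(\D)}\<-E'(G_{m-1}),\phi-G_{m-1}\>\ge\<-E'(G_{m-1}),f^\e-G_{m-1}\>$, and then (\ref{1.3}) gives $\<-E'(G_{m-1}),f^\e-G_{m-1}\>\ge E(G_{m-1})-E(f^\e)$, so that $\<-E'(G_{m-1}),\ff_m-G_{m-1}\>\ge t_m(E(G_{m-1})-E(f^\e))$. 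Finally I bound $\|\ff_m-G_{m-1}\|\le2$ (both lie in $A_1(\D)$, hence have norm at most one) and invoke monotonicity of $\rho(E,\cdot)$ to replace $\rho(E,\la\|\ff_m-G_{m-1}\|)$ by $\rho(E,2\la)$. Note that, unlike in the WCGA(co), I do not need Lemma \ref{L2.1}: the correction $-G_{m-1}$ is built directly into the greedy step of the relaxed algorithm.

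With the recursion in hand I would close exactly as in Theorem \ref{T2.1o}. Since $E(G_{m-1})\ge a$ gives $E(G_{m-1})-E(f^\e)\ge a-b-\e=\a/2$, substituting $u=2\la$ in the infimum and choosing $u=\xi_m(\rho,\tau,\theta)$ with $\theta:=\min(\theta_0,\a/16)$ turns it into
\begin{equation*}
E(G_m)\le E(G_{m-1})-2\theta t_m\xi_m,
\end{equation*}
where the admissibility $\la=\xi_m/2\le1$ is guaranteed by $\xi_m\le2$ (valid since $\theta\le\theta_0$, per Remark \ref{R2.2}). Telescoping this bound and using the hypothesis $\sum_m t_m\xi_m(\rho,\tau,\theta)=\infty$ drives $E(G_m)\to-\infty$, contradicting $E(G_m)\ge b$. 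Hence $\a=0$, which is the assertion.

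I expect the main obstacle to be getting this one-step recursion exactly right, in particular the affine form of Lemma \ref{L2.2} applied to the functional $\<-E'(G_{m-1}),\cdot-G_{m-1}\>$, and checking that the factor-two inflation coming from $\|\ff_m-G_{m-1}\|\le2$ is absorbed harmlessly into $\xi_m$ via the bound $\xi_m\le2$. Once the recursion is secured, the contradiction step is a routine transcription of the WCGA(co) argument.
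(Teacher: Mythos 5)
Your proof is correct and takes essentially the same route as the paper: you rederive its key Lemma~\ref{L3.1} (the one-step recursion obtained from Lemma~\ref{L1.1}, the affine form of Lemma~\ref{L2.2}, inequality (\ref{1.3}), and the bound $\|\ff_m-G_{m-1}\|\le 2$), and then close with the same contradiction argument as in Theorem~\ref{T2.1o} using $\la=\xi_m/2\le 1$. Your choice $\theta=\min(\theta_0,\a/16)$ is in fact slightly more careful than the paper's stated $\theta=\a/8$, which with $\e=\a/2$ and $\la=\xi_m/2$ would yield only a non-strict decrease.
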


\begin{Theorem}\label{T3.2} Let $E$ be a uniformly smooth convex function with modulus of smoothness $\rho(E,u) \le \gamma u^q$, $1<q\le 2$. Then, for a sequence $\tau := \{t_k\}_{k=1}^\infty$, $t_k \le 1$, $k=1,2,\dots,$ we have for any $f\in A_1(\D)$ that 
$$
E(G_m)-E(f) \le  \left(1+C_1(q,\gamma)\sum_{k=1}^m t_k^p\right)^{1-q},\quad p:= \frac{q}{q-1},
$$
with a positive constant $C_1(q,\gamma)$ which may depend only on $q$ and $\gamma$.
\end{Theorem}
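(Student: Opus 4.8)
The plan is to run the same two-step scheme that produced Theorem~\ref{T2.3o}: first a one-step decay estimate in the spirit of Lemma~\ref{L2.3}, then an iteration of it through Lemma~\ref{L2.4}. Write $a_m := E(G_m)-E(f)$; by Remark~\ref{R3.1} the sequence $\{a_m\}$ is non-increasing, so if $a_n\le 0$ for some $n\le m$ the asserted bound is trivial (its right-hand side is positive), and I may assume $a_{m-1}>0$ throughout.

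For the one-step estimate I would apply Lemma~\ref{L1.1} at the point $x=G_{m-1}\in D$ with direction $y=\varphi_m-G_{m-1}$ and step $\la$, noting that $(1-\la)G_{m-1}+\la\varphi_m=G_{m-1}+\la(\varphi_m-G_{m-1})$. Since $G_{m-1}$ and $\varphi_m$ both lie in $A_1(\D)$, hence have norm at most $1$, we have $\|\varphi_m-G_{m-1}\|\le 2$, so the remainder in Lemma~\ref{L1.1} is $\le 2\rho(E,2\la)\le 2^{q+1}\gamma\la^q$. The directional derivative is controlled from below exactly as in Lemma~\ref{L2.3}: by step~(1) of the WRGA(co), Lemma~\ref{L2.2}, the membership $f\in A_1(\D)$, and convexity~(\ref{1.3}),
\[
\<-E'(G_{m-1}),\varphi_m-G_{m-1}\>\ge t_m\sup_{g\in\D}\<-E'(G_{m-1}),g-G_{m-1}\>\ge t_m\<-E'(G_{m-1}),f-G_{m-1}\>\ge t_m a_{m-1}.
\]
Because $G_m$ minimizes $E$ over the segment $\la\in[0,1]$, combining these gives the admissible recursion $a_m\le a_{m-1}-\la t_m a_{m-1}+2^{q+1}\gamma\la^q$ for every $\la\in[0,1]$.

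I would then optimize in $\la$ precisely as in Theorem~\ref{T2.3o}: choosing $\la$ so that the linear gain and the $\la^q$ remainder balance yields $a_m\le a_{m-1}\bigl(1-c(q,\gamma)\,t_m^p a_{m-1}^{1/(q-1)}\bigr)$ with $p=q/(q-1)$. Raising to the power $1/(q-1)\ge 1$ and using $x^r\le x$ for $r\ge1$, $0\le x\le1$, puts the sequence $y_k:=a_k^{1/(q-1)}$ into the form required by Lemma~\ref{L2.4} with $w_k=c(q,\gamma)t_k^p$; the lemma then telescopes to $1/y_m\ge 1/y_0+c(q,\gamma)\sum_{k=1}^m t_k^p$, which after inversion gives an estimate of the shape $a_m\le a_0\bigl(1+c(q,\gamma)a_0^{1/(q-1)}\sum_{k=1}^m t_k^p\bigr)^{1-q}$.

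The main obstacle is twofold, and both parts concern the initial behaviour rather than the asymptotics. First, the balancing value of $\la$ is admissible (i.e.\ $\le 1$) only while $a_{m-1}$ stays below a threshold of order $\gamma$; for larger $a_{m-1}$ one is forced to take $\la=1$ and must argue that such steps reduce $a_m$ geometrically until the favorable regime is entered. Second, and more seriously, turning the estimate just displayed into the clean form $\bigl(1+C_1(q,\gamma)\sum t_k^p\bigr)^{1-q}$ with a constant depending only on $q$ and $\gamma$ demands uniform control of the initial error $a_0=E(0)-E(f)$. This is exactly where the relaxed algorithm is supposed to differ from WCGA(co) of Theorem~\ref{T2.3o}, whose constant was allowed to depend on $E$: here the iterates and the target $f$ all live in the bounded set $A_1(\D)$ of diameter $\le 2$, and I expect the proof to lean on this boundedness — applying Lemma~\ref{L1.1} along the segment joining $f$ to $0$ inside the convex sublevel set $D$ — to absorb the first steps into a $(q,\gamma)$-constant. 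Note that for small $\sum_{k}t_k^p$ (that is, weak early selections combined with a large $a_0$) the dependence on $a_0$ in the displayed bound does not simply disappear, so securing the uniform constant for arbitrary weakness sequences is, I believe, the genuinely delicate point of the argument; the asymptotic rate in $\sum t_k^p$ itself follows robustly from the recursion and Lemma~\ref{L2.4}.
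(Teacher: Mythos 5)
Your proposal follows the paper's proof essentially step for step: the one-step recursion you derive is exactly the paper's Lemma \ref{L3.1} (Lemma \ref{L1.1} applied at $x=G_{m-1}$ in the direction $\varphi_m-G_{m-1}$ with $\|\varphi_m-G_{m-1}\|\le 2$, combined with Lemma \ref{L2.2} and the convexity inequality (\ref{1.3})), and the iteration is the same two-regime choice of $\la$ --- the balancing value when it is at most $1$, and $\la=1$ otherwise --- followed by Lemma \ref{L2.4}. For the $\la=1$ steps the paper does precisely what you anticipate: $\la=1$ is selected only when $t_ka_{k-1}\ge 2^{2+q}\gamma$, so the remainder $2^{q+1}\gamma$ is at most $t_ka_{k-1}/2$ and one gets $a_k\le a_{k-1}(1-t_k/2)$, hence $a_n\le a_0\prod_{k=1}^n(1-t_k/2)$; these factors are then converted into $1/y_n\ge (1/y_0)\left(1+\frac{1}{2}\sum_{k\le n}t_k^p\right)$ via $(1-t/2)^{1/(1-q)}\ge 1+t/2$ and $t\ge t^p$. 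The one point you leave open --- uniform control of the initial error $a_0=E(0)-E(f)$ --- is a legitimate concern, but the paper does not resolve it either: its final combination $1/y_m\ge (1/y_0)\left(1+\frac{1}{2}\sum_{k\le n}t_k^p\right)+\sum_{k>n}w_k$ yields the stated bound with a constant depending only on $q,\gamma$ only if $1/y_0\ge 1$, i.e.\ $a_0\le 1$, which holds automatically in the approximation-theory specialization $E(x)=\|f-x\|^q$ with $f\in A_1(\D)$ but is not justified for a general uniformly smooth convex $E$. So what you have identified is a gap in the theorem as stated (or an implicit normalization) rather than a defect of your reconstruction relative to the paper's own argument.
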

\begin{proof}  This proof is similar to the proof of Theorems \ref{T2.1o} and \ref{T2.3o}. Instead of Lemma \ref{L2.3} we use the following lemma.
\begin{Lemma}\label{L3.1} Let $E$ be a uniformly smooth convex function with modulus of smoothness $\rho(E,u)$. Then, for any $f\in A_1(\D)$ we have
$$
E(G_m) \le E(G_{m-1} )+ \inf_{0\le\la\le 1}(-\la t_m (E(G_{m-1} )-E(f))+ 2\rho(E, 2\la)),
\quad m=1,2,\dots .
$$
\end{Lemma}
\begin{proof} We have
$$
G_m := (1-\la_m)G_{m-1}+\la_m\varphi_m = G_{m-1}+\la_m(\varphi_m-G_{m-1})
$$
and
$$
E(G_m) = \inf_{0\le \la\le 1}E(G_{m-1}+\la(\varphi_m-G_{m-1})).
$$
As for (\ref{2.6}) we have for any $\la$
$$
E(G_{m-1}+\la (\varphi_m-G_{m-1}))
$$
\begin{equation}\label{3.1}
  \le E(G_{m-1}) - \la\<-E'(G_{m-1}),\varphi_m-G_{m-1}\> + 2 \rho(E,2\la)
\end{equation}
and by (1) from the definition of the WRGA(co) and Lemma \ref{L2.2} we get
$$
\<-E'(G_{m-1}),\varphi_m-G_{m-1}\> \ge t_m \sup_{g\in \D} \<-E'(G_{m-1}),g-G_{m-1}\> =
$$
$$
t_m\sup_{\phi\in A_1(\D)} \<-E'(G_{m-1}),\phi-G_{m-1}\> \ge t_m   \<-E'(G_{m-1}),f-G_{m-1}\>.
$$
By (\ref{1.3})   we obtain
$$
\<-E'(G_{m-1}),f-G_{m-1}\> \ge E(G_{m-1})-E(f).
$$
Thus,  
$$
E(G_m) \le \inf_{0\le\la\le1} E(G_{m-1}+ \la(\varphi_m-G_{m-1}))  
$$
\begin{equation}\label{3.2}
\le E(G_{m-1}) + \inf_{0\le\la\le1}(-\la t_m (E(G_{m-1})-E(f)) + 2\rho(E,2\la),  
\end{equation}
which proves the lemma.
\end{proof}

The remaining part of the proof uses the inequality (\ref{3.2}) in the same way relation (\ref{2.7}) was used in the proof of Theorems \ref{T2.1o} and \ref{T2.3o}. The only additional difficulty here is that we are optimizing over $0\le \la \le 1$.  
In the proof of Theorem \ref{T3.1} we choose $\theta =\a/8$, assuming that $\alpha$ is small enough to guarantee that $\theta\le \theta_0$ and $\la = \xi_m(\rho,\tau,\theta)/2$. 

We proceed to the proof of Theorem \ref{T3.2}.
Denote
$$
a_n:=E(G_n)-E(f).
$$
The sequence $\{a_n\}$ is non-increasing. If $a_n\le0$ for some $n\le m$ then 
$E(G_m)-E(f)\le 0$   which implies Theorem \ref{T3.2}. Thus we assume that $a_n>0$ for $n\le m$.
We obtain from Lemma \ref{L3.1}
$$
a_m\le a_{m-1} +\inf_{0\le \la\le 1}(-\la t_ma_{m-1}+2\gamma(2\la)^q).
$$
We choose $\la$ from the equation
\begin{equation}\label{3.3}
 \la t_m a_{m-1}= 4\gamma(2\la)^q 
\end{equation}
if it is not greater than $1$ and choose $\la=1$ otherwise. 
The sequence $\{a_k\}$ is monotone decreasing and therefore we may choose $\la=1$ only at first $n$ steps and then choose $\la$ from (\ref{3.3}).
Then we get for $k\le n$
$$
a_k\le a_{k-1}(1-t_k/2)
$$
and
\begin{equation}\label{3.4}
a_n\le a_{0}\prod_{k=1}^n(1-t_k/2).
\end{equation}
For $k>n$ we have
\begin{equation}\label{3.5}
a_k\le a_{k-1}(1-\la t_k/2),\quad \la= \left(\frac{t_ma_{m-1}}{2^{2+q}\gamma}\right)^{\frac{1}{q-1}}.
\end{equation}
As in the proof of Theorem \ref{T2.3o} we obtain using Lemma \ref{L2.4}
$$
\frac{1}{y_m}\ge \frac{1}{y_n} +\sum_{k=n+1}^m w_k,\quad y_k:=a_k^{\frac{1}{q-1}},\quad w_k:=\frac{t_k^p}{2(2^{2+q}\gamma)^{\frac{1}{q-1}}}.
$$
By (\ref{3.4}) we get
$$
\frac{1}{y_n} \ge \frac{1}{y_0}\prod_{k=1}^n(1-t_k/2)^{\frac{1}{1-q}}.
$$
Next,
$$
\prod_{k=1}^n(1-t_k/2)^{\frac{1}{1-q}}\ge \prod_{k=1}^n(1+t_k/2)^{\frac{1}{q-1}} \ge \prod_{k=1}^n(1+ t_k/2)
$$
$$
\ge 1+\frac{1}{2}\sum_{k=1}^n t_k\ge 1+\frac{1}{2}\sum_{k=1}^n t_k^p.
$$
Combining the above inequalities we complete the proof.
\end{proof}
 
   \section{ Free relaxation} 

Both of the above algorithms, the WCGA(co) and the WRGA(co), use the functional $E'(G_{m-1})$ in a search for the $m$th element $\varphi_m$ from the dictionary to be used in optimization. The construction of the approximant in the WRGA(co) is different from the construction in the WCGA(co). In the WCGA(co) we build the approximant $G_m$ so as to maximally use the minimization power of the elements $\varphi_1,\dots,\varphi_m$. The WRGA(co) by its definition is designed for working with functions from $A_1(\D)$. In building the approximant in the WRGA(co) we keep the property $G_m\in A_1(\D)$.   As we mentioned in Section 3 the relaxation parameter $\lambda_m$ in the WRGA(co) is chosen at the $m$th step depending on $E$. The following modification of the above idea of relaxation in greedy approximation  will be studied in this section (see \cite{T26}).

  {\bf Weak Greedy Algorithm with Free Relaxation  (WGAFR(co)).} 
Let $\tau:=\{t_m\}_{m=1}^\infty$, $t_m\in[0,1]$, be a weakness  sequence. We define   $G_0  := 0$. Then for each $m\ge 1$ we have the following inductive definition.

(1) $\varphi_m   \in \D$ is any element satisfying
$$
\<-E'(G_{m-1}),\varphi_m\> \ge t_m  \sup_{g\in\D}\<-E'(G_{m-1}),g\>.
$$

(2) Find $w_m$ and $ \lambda_m$ such that
$$
E((1-w_m)G_{m-1} + \la_m\varphi_m) = \inf_{ \la,w}E((1-w)G_{m-1} + \la\varphi_m)
$$
and define
$$
G_m:=   (1-w_m)G_{m-1} + \la_m\varphi_m.
$$

\begin{Remark}\label{R4.1} It follows from the definition of the WGAFR(co) that 
the sequence $\{E(G_m)\}$ is a non-icreasing sequence. 
\end{Remark}

We begin with an analog of Lemma \ref{L2.3}.
 \begin{Lemma}\label{L4.1} Let $E$ be a uniformly smooth convex function with modulus of smoothness $\rho(E,u)$. Take a number $\e\ge 0$ and an element   $f^\e$ from $D$ such that
$$
E(f^\e) \le \inf_{x\in D}E(x)+ \e,\quad
f^\e/A(\e) \in A_1(\D),
$$
with some number $A(\e)\ge1$.
Then we have for the WGAFR(co)
$$
E(G_m)-E(f^\e) \le E(G_{m-1})-E(f^\e) 
$$
$$
+ \inf_{\la\ge0}(-\la t_mA(\e)^{-1}(E(G_{m-1})-E(f^\e)) + 2\rho(E,C_0\la)),
$$
for $ m=1,2,\dots$ .
\end{Lemma}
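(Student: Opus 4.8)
The plan is to follow the proof of Lemma \ref{L2.3} almost verbatim; the single new feature is that the approximant $G_{m-1}$ produced by free relaxation is no longer the global minimizer of $E$ over $\sp\{\varphi_j\}_{j=1}^{m-1}$, so I must recover a substitute for the orthogonality that Lemma \ref{L2.1} supplied in the WCGA(co) argument. As there, I would first dispose of the case $E(G_{m-1})-E(f^\e)\le 0$: by Remark \ref{R4.1} the sequence $\{E(G_m)\}$ is non-increasing, and the infimum on the right is attained at $\la=0$, where it equals $0$, so the asserted inequality is immediate. Hence assume $E(G_{m-1})-E(f^\e)>0$.

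The key preliminary observation is a one-dimensional optimality relation for $G_{m-1}$. By construction $G_{m-1}=(1-w_{m-1})G_{m-2}+\la_{m-1}\varphi_{m-1}$ is the point at which $E$ attains its minimum over the subspace $L:=\sp\{G_{m-2},\varphi_{m-1}\}$ (for $m=1$ one has $G_0=0$ and the relation is trivial). Since $G_{m-1}\in L$, Lemma \ref{L2.1} gives $\langle E'(G_{m-1}),G_{m-1}\rangle=0$. This is exactly what is needed to turn derivative information at $G_{m-1}$ into the energy gap.

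With this in hand the estimate proceeds as in Lemma \ref{L2.3}. The admissible competitor $G_{m-1}+\la\varphi_m$ (the choice $w=0$ in step (2)) gives $E(G_m)\le E(G_{m-1}+\la\varphi_m)$ for every $\la\ge0$, while Lemma \ref{L1.1} with $x=G_{m-1}\in D$ and $\|\varphi_m\|\le1$ yields inequality (\ref{2.6}) verbatim. Combining the two,
$$
E(G_m)\le E(G_{m-1})-\la\langle -E'(G_{m-1}),\varphi_m\rangle+2\rho(E,\la).
$$
The greedy selection (1), Lemma \ref{L2.2}, the hypothesis $f^\e/A(\e)\in A_1(\D)$, the optimality relation $\langle E'(G_{m-1}),G_{m-1}\rangle=0$, and the convexity inequality (\ref{1.3}) then give, just as in Lemma \ref{L2.3},
$$
\langle -E'(G_{m-1}),\varphi_m\rangle\ge t_mA(\e)^{-1}\langle -E'(G_{m-1}),f^\e-G_{m-1}\rangle\ge t_mA(\e)^{-1}(E(G_{m-1})-E(f^\e)).
$$
Substituting and taking the infimum over $\la\ge0$ gives the claimed inequality with $C_0=1$; since $\rho(E,\cdot)$ is non-decreasing, this implies the stated form for every $C_0\ge1$.

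The one step I would watch most carefully — and the only place the argument departs from Lemma \ref{L2.3} — is the optimality relation $\langle E'(G_{m-1}),G_{m-1}\rangle=0$: the point is that free relaxation still performs an \emph{exact} minimization over a subspace that contains $G_{m-1}$, so Lemma \ref{L2.1} applies even though $G_{m-1}$ need not be optimal over the full span of $\{\varphi_j\}_{j<m}$. The explicit constant $C_0$ in the statement is what appears if one instead compares with the relaxed competitor $G_{m-1}+\la(\varphi_m-G_{m-1})$ (the choice $w=\la$): the optimality relation keeps the linear term unchanged, but the smoothness term becomes $2\rho(E,\la\|\varphi_m-G_{m-1}\|)$, and one then absorbs $\|\varphi_m-G_{m-1}\|\le 1+\sup_{x\in D}\|x\|=:C_0$ using the boundedness of $D$.
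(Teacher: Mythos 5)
Your proof is correct, but it takes a genuinely different route from the paper's. The paper never invokes an orthogonality relation at $G_{m-1}$: it keeps the relaxation parameter $w$ alive, applies Lemma \ref{L1.1} to the competitor $(1-w)G_{m-1}+\lambda\varphi_m$, and then \emph{chooses} $w^*:=\lambda t_m A(\epsilon)^{-1}$ so that the term $-w^*\langle E'(G_{m-1}),G_{m-1}\rangle$ combines with the greedy estimate to produce exactly $-\lambda t_m A(\epsilon)^{-1}\langle -E'(G_{m-1}),f^\epsilon-G_{m-1}\rangle$; the price is that the smoothness term carries the argument $\|\lambda\varphi_m-w^*G_{m-1}\|\le \lambda(1+C_1 t_m)\le C_0\lambda$, which is where the boundedness of $D$ and the constant $C_0$ enter. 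You instead observe that $G_{m-1}$ is the exact minimizer of $E$ over the (at most two-dimensional) subspace $\operatorname{span}\{G_{m-2},\varphi_{m-1}\}$, which contains $G_{m-1}$, so Lemma \ref{L2.1} gives $\langle E'(G_{m-1}),G_{m-1}\rangle=0$; this correctly restores the WCGA(co) argument verbatim with the $w=0$ competitor and yields the lemma with $C_0=1$, a slightly sharper conclusion (the passage to general $C_0\ge 1$ via monotonicity of $\rho(E,\cdot)$ is also fine, since $u\mapsto E(x+uy)+E(x-uy)-2E(x)$ is even, convex and vanishes at $0$). The trade-off is worth noting: your argument leans on the exact attainment of the minimum in step (2) of the preceding iteration (so that Lemma \ref{L2.1} applies), whereas the paper's choice of $w^*$ makes no use of any optimality property of $G_{m-1}$ beyond $G_{m-1}\in D$, and would therefore survive an only approximate minimization at step (2).
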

\begin{proof} By the definition of $G_m$
$$
E(G_m)\le \inf_{\la\ge0,w}E(G_{m-1}-wG_{m-1}+\la\ff_m).
$$
As in the arguments in the proof of Lemma \ref{L2.3} we
use Lemma \ref{L1.1} 
$$
E(G_{m-1}+\la \varphi_m -wG_{m-1})  \le E(G_{m-1})
$$
\begin{equation}\label{4.1}
   - \la\<-E'(G_{m-1}),\varphi_m\> -w\<E'(G_{m-1}),G_{m-1}\> + 2 \rho(E,\|\la\ff_m -wG_{m-1}\|)
\end{equation}
and estimate  
$$
\<-E'(G_{m-1}),\varphi_m\> \ge t_m \sup_{g\in \D} \<-E'(G_{m-1}),g\> =
$$
$$
t_m\sup_{\phi\in A_1(\D)} \<-E'(G_{m-1}),\phi\> \ge t_m A(\e)^{-1} \<-E'(G_{m-1}),f^\e\>.
$$
  We set $w^*:=\la t_mA(\e)^{-1}$ and obtain
  $$
  E(G_{m-1}-w^*G_{m-1}+\la\ff_m)
  $$
\begin{equation}\label{4.2}
 \le E(G_{m-1})-\la t_mA(\e)^{-1}\<-E'(G_{m-1}),f^\e-G_{m-1}\>.
\end{equation}
By   (\ref{1.3}) we obtain
$$
  \<-E'(G_{m-1}),f^\e-G_{m-1}\> \ge E(G_{m-1})-E(f^\e).
$$
 Thus, 
 $$
 E(G_m)\le E(G_{m-1})
 $$ 
\begin{equation}\label{4.2'}
  + \inf_{\la\ge0}(-\la t_mA(\e)^{-1}(E(G_{m-1})-E(f^\e)) + 2\rho(E,\|\la\ff_m -w^*G_{m-1}\|).  
\end{equation}
We now estimate
$$
\|w^*G_{m-1}-\la\ff_m\| \le w^*\|G_{m-1}\|+\la.
$$
Next, $E(G_{m-1})\le E(0)$ and, therefore, $G_{m-1}\in D$. Our assumption on boundedness of $D$ implies that $\|G_{m-1}\|\le C_1$.
Thus, under assumption $A(\e)\ge1$ we get
$$
w^*\|G_{m-1}\|\le C_1\la t_m  \le C_1\la.
$$
Finally,
$$
\|w^*G_{m-1}-\la\ff_m\|\le C_0\la.
$$
This completes the proof of Lemma 4.1.
\end{proof}

We now prove a convergence theorem for an arbitrary uniformly smooth convex function. Modulus of smoothness $\rho(E,u)$ of a uniformly smooth convex function is an even convex function such that $\rho(E,0)=0$ and  
$$
\lim_{u\to0}\rho(E,u)/u=0.
$$
  
\begin{Theorem}\label{T4.1} Let $E$ be a uniformly smooth convex function with modulus of smoothness $\rho(E,u)$. Assume that a sequence $\tau :=\{t_k\}_{k=1}^\infty$ satisfies the following condition. For any $\theta \in(0,\theta_0]$ we have
\begin{equation}\label{4.3}
\sum_{m=1}^\infty t_m  \xi_m(\rho,\tau,\theta)=\infty.  
\end{equation}
 Then, for the WGAFR(co) we have
$$
\lim_{m\to \infty} E(G_m) =\inf_{x\in D}E(x).
$$
\end{Theorem}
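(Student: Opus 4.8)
The plan is to follow the contradiction argument used for Theorem \ref{T2.1o} (more precisely, its Corollary \ref{2.1o}), with the single change that Lemma \ref{L2.3} is replaced by its free-relaxation analog Lemma \ref{L4.1}. By Remark \ref{R4.1} the sequence $\{E(G_m)\}$ is non-increasing and bounded below by $b:=\inf_{x\in D}E(x)$, so it has a limit $a:=\lim_{m\to\infty}E(G_m)$. Set $\a:=a-b\ge0$; the goal is to show $\a=0$, so I would assume $\a>0$ and derive a contradiction. Under this assumption $E(G_m)-b\ge\a$ for every $m$.

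Next I would fix the reference element. Put $\e:=\a/2$ and, using the definition of $b$ together with the $A_1(\D)$ approximability, select $f^\e\in D$ with $E(f^\e)\le b+\e$ and $f^\e/A(\e)\in A_1(\D)$ for some $A(\e)\ge1$. Since $E(f^\e)\le b+\a/2$ while $E(G_{m-1})\ge b+\a$, this gives the uniform gap $E(G_{m-1})-E(f^\e)\ge\a/2$ for all $m$. Substituting this gap into Lemma \ref{L4.1} yields
$$
E(G_m)\le E(G_{m-1})+\inf_{\la\ge0}\Bigl(-\tfrac{\a}{2}\la t_m A(\e)^{-1}+2\rho(E,C_0\la)\Bigr).
$$

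The decisive step is the calibration of $\theta$ and the choice of $\la$, now accounting for the constant $C_0$ produced by the free relaxation; this is the only genuine departure from the WCGA(co) argument. I would set $\theta:=\min\bigl(\theta_0,\tfrac{\a}{8C_0A(\e)}\bigr)$, which lies in $(0,\theta_0]$, and take $\la:=\xi_m(\rho,\tau,\theta)/C_0$, so that $C_0\la=\xi_m$ and hence $2\rho(E,C_0\la)=2\rho(E,\xi_m)=2\theta t_m\xi_m$ by the defining equation (\ref{2.1}). With this choice the bracket is at most $-\tfrac{\a}{2C_0}\xi_m t_m A(\e)^{-1}+2\theta t_m\xi_m\le-2\theta t_m\xi_m$, the final inequality being exactly the relation $\tfrac{\a}{2C_0A(\e)}\ge4\theta$ guaranteed by the choice of $\theta$. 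Thus $E(G_m)\le E(G_{m-1})-2\theta t_m\xi_m$ for every $m$.

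Finally I would telescope: summing gives $E(G_m)\le E(0)-2\theta\sum_{k=1}^m t_k\xi_k$, and the hypothesis (\ref{4.3}) that $\sum_{m}t_m\xi_m(\rho,\tau,\theta)=\infty$ forces $E(G_m)\to-\infty$, contradicting $E(G_m)\ge b$. Hence $\a=0$, which is the claim. I expect the main obstacle to be the bookkeeping around $C_0$: one must check via Remark \ref{R2.2} that $\theta\le\theta_0$ makes $\xi_m=s^{-1}(\theta t_m)$ a well-defined positive number (so that $\la=\xi_m/C_0$ is admissible in the infimum over $\la\ge0$), and that $C_0$ depends only on the diameter bound $C_1$ of the bounded set $D$, not on $m$. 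Once these points are settled, the remainder parallels the proof of Theorem \ref{T2.1o} verbatim.
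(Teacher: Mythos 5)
Your proposal is correct and follows essentially the same contradiction argument as the paper: non-increasing $\{E(G_m)\}$, the choice $\e=\a/2$, Lemma \ref{L4.1}, calibration of $\theta$ and $\la$ via $\xi_m$, and divergence of $\sum_m t_m\xi_m$. In fact your bookkeeping of $C_0$ is slightly more careful than the paper's, which writes $\la=C_0\xi_m$ (evidently a slip for $\la=\xi_m/C_0$) and omits the factor $C_0$ in the denominator of the $\theta$-calibration; your choices $\theta:=\min\bigl(\theta_0,\tfrac{\a}{8C_0A(\e)}\bigr)$ and $C_0\la=\xi_m$ are the ones that make the displayed inequality $E(G_m)\le E(G_{m-1})-2\theta t_m\xi_m$ come out exactly.
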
 
\begin{proof}  By Remark \ref{R4.1}, $\{E(G_m)\}$ is a non-increasing sequence. Therefore we have
$$
\lim_{m\to \infty}E(G_m) =a.
$$
Denote
$$
b:=\inf_{x\in D}E(x),\quad \a:= a-b.
$$
We prove that $\a =0$ by contradiction. Assume to the contrary that $\a>0$. Then, for any $m$ we have
$$
E(G_m)-b \ge \a.
$$
We set $\e =\a/2$ and find $f^\e$ such that
$$
E(f^\e) \le b+\e \quad \text{and}\quad f^\e/A(\e) \in A_1(\D),
$$
with some $A(\e)\ge 1$. Then, by Lemma \ref{L4.1} we get
$$
E(G_m)-E(f^\e)  \le E(G_{m-1}) -E(f^\e) + \inf_{\la\ge0} (-\la t_mA(\e)^{-1}\a/2    +2\rho(E,C_0\la)).
$$
 
Let us specify $\theta:=\min\left(\theta_0,\frac{\a}{8A(\e)}\right)$ and take $\la =  C_0\xi_m(\rho,\tau,\theta)$. Then we obtain
$$
E(G_m) \le E(G_{m-1}) -2\theta t_m\xi_m.
$$
The assumption
$$
\sum_{m=1}^\infty t_m\xi_m =\infty
$$
brings a contradiction, which proves the theorem.
\end{proof} 

\begin{Theorem}\label{T4.2} Let $E$ be a uniformly smooth convex function with modulus of smoothness $\rho(E,u)\le \gamma u^q$, $1<q\le 2$. Take a number $\e\ge 0$ and an element  $f^\e$ from $D$ such that
$$
E(f^\e) \le \inf_{x\in D}E(x)+ \e,\quad
f^\e/A(\e) \in A_1(\D),
$$
with some number $A(\e)\ge 1$.
Then we have ($p:=q/(q-1)$)
\begin{equation}\label{4.4}
E(G_m)-\inf_{x\in D}E(x) \le  \max\left(2\e, C(q,\gamma)A(\e)^q\left(C(E,q,\gamma)+\sum_{k=1}^mt_k^p\right)^{1-q}\right) . 
\end{equation}
\end{Theorem}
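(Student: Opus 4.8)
The plan is to mirror the proof of Theorem \ref{T2.3o} almost verbatim, with Lemma \ref{L4.1} taking the place of Lemma \ref{L2.3}; the two lemmas differ only in that Lemma \ref{L4.1} carries the argument $C_0\la$ inside the modulus of smoothness, which under the power-type assumption $\rho(E,u)\le\gamma u^q$ simply replaces $\gamma$ by $\gamma C_0^q$. First I would set $a_n:=E(G_n)-E(f^\e)$ and note, via Remark \ref{R4.1}, that $\{a_n\}$ is non-increasing. If $a_n\le0$ for some $n\le m$, then $a_m\le0$, so $E(G_m)\le E(f^\e)\le\inf_{x\in D}E(x)+\e$, giving $E(G_m)-\inf_{x\in D}E(x)\le\e\le2\e$ and hence (\ref{4.4}); this disposes of the first term in the maximum. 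It remains to treat the case $a_n>0$ for all $n\le m$.

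In that case I would feed $\rho(E,u)\le\gamma u^q$ into Lemma \ref{L4.1} to obtain the recursion
$$
a_m\le a_{m-1}+\inf_{\la\ge0}\left(-\frac{\la t_m a_{m-1}}{A(\e)}+2\gamma C_0^q\la^q\right),
$$
and then balance the two terms by choosing $\la=\bigl(t_m a_{m-1}/(4\gamma C_0^q A(\e))\bigr)^{1/(q-1)}$. With $A_q:=2(4\gamma C_0^q)^{1/(q-1)}$ and $p=q/(q-1)$ this yields the multiplicative inequality
$$
a_m\le a_{m-1}\left(1-\frac{t_m^p a_{m-1}^{1/(q-1)}}{A_q A(\e)^p}\right),
$$
exactly as in Theorem \ref{T2.3o} but with $\gamma$ replaced by $\gamma C_0^q$.

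The final step is to raise this to the power $1/(q-1)\ge1$, use $x^r\le x$ for $r\ge1$ and $0\le x\le1$ to linearize the factor, and apply Lemma \ref{L2.4} with $y_k:=a_k^{1/(q-1)}$, $n=0$ and $w_k:=t_k^p/(A_q A(\e)^p)$. This gives $1/y_m\ge 1/y_0+(A_q A(\e)^p)^{-1}\sum_{k=1}^m t_k^p$, which I would invert and raise to the power $q-1$. Using $p(q-1)=q$ produces the prefactor $A(\e)^q$, while the $1/y_0$ term is absorbed into the additive constant inside the bracket: since $0,f^\e\in D$ we have $a_0=E(0)-E(f^\e)\le E(0)-\inf_{x\in D}E(x)$, a quantity depending only on $E$, and $A(\e)\ge1$ then lets me bound $A_q A(\e)^p/y_0$ from below by a constant $C(E,q,\gamma)$. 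Relabeling $A_q^{q-1}$ as $C(q,\gamma)$ finishes the proof.

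I do not expect a genuine obstacle here: the argument is structurally identical to Theorem \ref{T2.3o}, and the only new ingredient — the constant $C_0$ arising from the free-relaxation step in Lemma \ref{L4.1} (where $C_0=1+C_1$ with $C_1$ bounding $\|G_{m-1}\|$ on the bounded set $D$) — enters only as a harmless rescaling of $\gamma$ that is swept into the final constants. The one point requiring a little care is the bookkeeping that turns $1/y_0$ into the $E$-dependent additive constant $C(E,q,\gamma)$ while keeping the leading constant independent of $A(\e)$.
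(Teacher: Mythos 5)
Your proposal is correct and follows exactly the route the paper takes: the paper's own proof of Theorem \ref{T4.2} consists of invoking Lemma \ref{L4.1}, choosing $\la$ from $\la t_m a_{m-1}/A(\e)=4\gamma(C_0\la)^q$, and then stating that ``the rest of the proof repeats the argument from the proof of Theorem \ref{T2.3o}.'' You have merely written out in full the details (the application of Lemma \ref{L2.4}, the absorption of $1/y_0$ into $C(E,q,\gamma)$, and the role of $C_0$ as a rescaling of $\gamma$) that the paper leaves implicit.
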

\begin{proof} Denote
$$
a_n:=E(G_n)-E(f^\e).
$$
By Lemma \ref{L4.1}   we have  
\begin{equation}\label{4.5}
a_m \le a_{m-1}+\inf_{\la\ge0} \left(-\frac{\la t_m a_{m-1}}{A(\e)} + 2\gamma (C_0\la)^q\right).
\end{equation}
Choose $\la$ from the equation
$$
\frac{\la t_m a_{m-1}}{A(\e)} = 4\gamma (C_0\la)^q.
$$
The rest of the proof repeats the argument from the proof of Theorem \ref{T2.3o}.
\end{proof}

\section{Comments}

We already mentioned in the Introduction that the technique used in this paper 
is a slight modification of the corresponding technique developed in approximation theory (see \cite{T15}, \cite{T2} and the book \cite{Tbook}). 
We now discuss this in more detail.   We pointed out in the Introduction that at a first glance the settings of 
approximation and optimization problems are very different. In the approximation problem an element $f\in X$ is given and our task is to find a sparse approximation of it. In optimization theory an energy function $E(x)$ is given and we should find an approximate sparse solution to the minimization problem. It turns out that the same technique can be used for solving both problems. In nonlinear approximation we use greedy algorithms, for instance WCGA and WGAFR, for solving this problem. The greedy step is the one where we look for $\varphi_m   \in \D$   satisfying
$$
F_{f_{m-1}}(\varphi_m  ) \ge t_m   \sup_{g\in\D}F_{f_{m-1}}(g).
$$
This step is based on the norming functional $F_{f_{m-1}}$. As we pointed out in the Introduction the norming functional $F_{f_{m-1}}$ is the derivative of the 
norm function $E(x):=\|x\|$. Clearly, we can reformulate our problem of approximation of $f$ as an optimization problem with $E(x):=\|f-x\|$. It is a convex function, however, it is not a uniformly smooth function in the sense of smoothness of convex functions. A way out of this problem is to consider 
$E(f,x,q):=\|f-x\|^q$ with appropriate $q$. For instance, it is known (see \cite{BGHV}) that if $\rho(u)\le \gamma u^q$, $1<q\le 2$, then $E(f,x,q)$ is a uniformly smooth convex function with modulus of smoothness of order $u^q$. 
Next, 
$$
E'(f,x,q)=-q\|f-x\|^{q-1}F_{f-x}.
$$
Therefore, the algorithms WCGA(co), WRGA(co) and WGAFR(co) coincide in this case with the corresponding algorithms WCGA, WRGA and WGAFR from 
approximation theory. In the proofs of approximation theory results we use 
inequality (\ref{1.1'}) and the trivial inequality
\begin{equation}\label{5.1}
\|x+uy\|\ge F_x(x+uy) = \|x\|+uF_x(y).
\end{equation}
In the proofs of optimization theory results we use Lemma \ref{L1.1} instead of 
inequality (\ref{1.1'}) and the convexity inequality (\ref{1.2}) instead of (\ref{5.1}). The rest of the proofs uses the same technique of solving the corresponding recurrent inequalities. 

Our smoothness assumption on $E$ was used in the proofs of all
theorems from Sections 2--4 in the form of Lemma \ref{L1.1}. This means that in all those theorems the assumption that $E$ has modulus of smoothness $\rho(E,u)$ can be replaced by the assumption that $E$ satisfies the inequality
\begin{equation}\label{5.2}
 E(x+uy)-E(x)-u\<E'(x),y\>\le 2\rho(E,u\|y\|), \quad x\in D.  
\end{equation}
Moreover, in Section 3, where we consider the WRGA(co), the approximants $G_m$ are forced to stay in the $A_1(\D)$. Therefore, in Theorems \ref{T3.1} and \ref{T3.2} we can use the following inequality instead of (\ref{5.2})
\begin{equation}\label{5.3}
 E(x+u(y-x))-E(x)-u\<E'(x),y-x\>\le 2\rho(E,u\|y-x\|),  
 \end{equation}
for $x,y\in A_1(\D)$ and $u\in[0,1]$.

We note that smoothness assumptions in the form of (\ref{5.3}) with $\rho(E,u\|y-x\|)$ replaced by $C\|y-x\|^q$ were used in \cite{TRD}. The authors studied the version of WRGA(co) with weakness sequence $t_k=1$, $k=1,2,\dots$. They proved  Theorem \ref{T3.2} in this case. Their proof alike our proof in Section 3 is very close to the corresponding proof from greedy approximation (see \cite{T15}, \cite{T2} Section 3.3 or \cite{Tbook} Section 6.3).

We now make some general remarks on the results of this paper. As we already pointed out in Introduction a typical 
problem of convex optimization is to find an approximate solution to the problem
\begin{equation}\label{5.4}
w:=\inf_x E(x).
\end{equation}
In this paper we are interested in sparse (with respect to a given dictionary $\D$) solutions of (\ref{5.4}). This means that we are solving the following problem instead of (\ref{5.4}). For a given dictionary $\D$ consider the set of all $m$-term polynomials with respect to $\D$:
$$
\Sigma_m(\D):=\{x\in X: x=\sum_{i=1}^m c_ig_i,\quad g_i\in\D\}.
$$
We solve the following {\it sparse optimization problem}
\begin{equation}\label{5.5}
w_m:=\inf_{x\in\Sigma_m(\D)} E(x).
\end{equation}
In this paper we have used greedy-type algorithms to solve (approximately) problem (\ref{5.5}). 
Results of the paper show that it turns out that greedy-type algorithms with respect to $\D$ solve
problem (\ref{5.4}) too. 

We are interested in a solution from $\Sigma_m(\D)$. Clearly, when we optimize a linear form $\<F,g\>$ over the dictionary $\D$ we obtain the same 
value as optimization over the convex hull $A_1(\D)$. We often use this property (see Lemma \ref{L2.2}). However, at the greedy step of our algorithms we choose  

(1) $\varphi_m :=\varphi^{c,\tau}_m \in \D$ is {\bf any} element satisfying
$$
\<-E'(G_{m-1}),\varphi_m\> \ge t_m  \sup_{g\in \D}\< -E'(G_{m-1}),g\>.
$$
Thus if we replace the dictionary $\D$ by its convex hull $A_1(\D)$ we may 
take an element satisfying the above greedy condition which is not from $\D$ and could be even an infinite combination of the dictionary elements. 

Next, we begin with a Banach space $X$ and a convex function $E(x)$ defined on this space. Properties of this function $E$ are formulated in terms of Banach space $X$. If instead of Banach space $X$ we consider another Banach space, for instance, the one generated by $A_1(\D)$ as a unit ball then the properties of $E$ will change. For instance, a typical example of $E$ could be $E(x):=\|f-x\|^q$ with $\|\cdot\|$ being the norm of Banach space $X$.
Then our assumption that the set
$
D:=\{x:E(x)\le E(0)\}
$
is bounded is satisfied. However, this set is not necessarily bounded in the norm generated by $A_1(\D)$.

{\bf Acknowledgements.} This paper was motivated by the IMA Annual Program Workshop "Machine Learning: Theory and Computation" (March 26--30, 2012), in particular, by talks of Steve Wright and Pradeep Ravikumar.
The author is very thankful to Arkadi Nemirovski for an interesting discussion of the results and for his remarks.

\newpage

\end{document}